\newtheorem{theorem}{Theorem}
\newtheorem{proposition}[theorem]{Proposition}
\newtheorem{corollary}[theorem]{Corollary}
\newtheorem{lemma}[theorem]{Lemma}
\theoremstyle{definition}
\title{Issues Concerning the Realizability of Blackwell Optimal Policies in Reinforcement Learning}
\author{Nicholas Denis \\
nick.denis.1983@gmail.com\\
} 
\begin{document}

\maketitle

\begin{abstract}
$N$-discount optimality was introduced as a hierarchical form of policy- and value-function optimality, with Blackwell optimality lying at the top level of the hierarchy [17,3]. We formalize notions of myopic discount factors, value functions and policies in terms of Blackwell optimality in MDPs, and we provide a novel concept of regret, called Blackwell regret, which measures the regret compared to a Blackwell optimal policy. Our main analysis focuses on long horizon MDPs with sparse rewards. We show that selecting the discount factor under which zero Blackwell regret can be achieved becomes arbitrarily hard. Moreover, even with oracle knowledge of such a discount factor that can realize a Blackwell regret-free value function, an $\epsilon$-Blackwell optimal value function may not even be gain optimal. Difficulties associated with this class of problems is discussed, and the notion of a policy gap is defined as the difference in expected return between a given policy and any other policy that differs at that state; we prove certain properties related to this gap. Finally, we provide experimental results that further support our theoretical results.\footnote[1]{work in progress.}
\end{abstract}

\section{Introduction}
\paragraph{}When is one policy better than another, and how does one arrive at the best policy? Additionally, is there a difference between the theoretical answers to these questions and how they are addressed in practice? Within the reinforcement learning and Markov decision process community, these questions are fundamental and nothing new. Indeed, though these questions have been well defined and well studied, this paper reconsiders important issues with solutions to MDPs and RL problems. Specifically, we explore the role of the discount factor $\gamma$ in finding an optimal policy, $\pi_{\gamma}^{*}$, and value function $V^{\pi^{*}}_{\gamma}$. Once $\gamma$ is chosen, though an (approximately) optimal solution may be returned by some algorithmic solution, it may still be unsatisfactory in some regards (as demonstrated by OpenAI with the Coastrunners domain). In this paper we explore the relationship between $\gamma, \epsilon$ in arriving at an $\epsilon$-optimal policy, as well as a researchers preference or evaluation of such a policy. We discuss issues surrounding selecting $\gamma$ and $\epsilon$ without any domain knowledge of the problem, and how even theoretically sound algorithms such as PAC-MDP solution methods can produce policies that, though satisfy being $(\epsilon, \delta)$-PAC, are still not even gain optimal. Especially difficult are long-horizon problems (LHPs) with sparse rewards. Motived by such problems we introduce a novel concept of regret, called \textit{Blackwell Regret}, $\mathcal{R}_{B}$, which compares the expected return of a given policy to that of a Blackwell optimal policy, evaluated at an appropriate value of $\gamma \in [0,1)$. We believe Blackwell regret is more akin to how humans experience regret when comparing oneself to the highest of standards. We formalize the notion of myopic discount factors and policies and introduce a notion of $\gamma$ being Blackwell realizable. We discuss how policies that minimize Blackwell regret are fundamentally difficult to solve for, as recent literature has hinted at for long horizon problems (LHP's)[10]. This is due to the existence of \textit{pivot states} where discovering the Blackwell optimal policy hinges on discerning the values of a Blackwell optimal policy and a non-Blackwell optimal policy which can be arbitrarily close at a given state. Even with oracle knowledge of $\gamma^{*}$, the \textit{infimum} $\gamma$ that can induce a Blackwell optimal and Blackwell regret-free policy, $\forall \epsilon > 0$, an $\epsilon$-accurate Blackwell optimal policy may not be Blackwell optimal, and in fact may not even be gain optimal. We provide experimental results using PAC-MDP algorithms that demonstrate this phenomenon. Motivated by these findings, we argue the need for progress within three areas of theoretical research: 1) Analytical solution methods for Blackwell optimal policies; 2) provable convergent algorithms for solving n-discount optimal policies; 3) goal based and human preference based RL. Our focus is on the latter.

\section{Background}
\subsection{Markov Decision Process}
\paragraph{}Recall that an MDP, $\mathcal{M}$, is an n-tuple $\langle \mathcal{S}, \mathcal{A}, p, R, \gamma \rangle$, where $\mathcal{S}$ is a finite state space, $\mathcal{A}$ is a finite action space, $p=p(s'\vert s,a)$ is the transition kernel, $R: \mathcal{S}\times \mathcal{A} \times \mathcal{S} \to [0, R_{max}]$ is the reward function and $\gamma \in [0,1]$ is the discounted current value associated to one unit of reward to be received one unit of time into the future. This work focusses on deterministic Markovian policies.

\subsection{Notation}
\paragraph{}
This work considers how $\gamma$ plays a role both in learning a policy as well as how it is used in evaluating the value function associated to a policy, perhaps learned with a different discout factor. For this reason, it is important to clearly separate $\gamma$ used to learn a policy $\pi_{\gamma}$, and $\gamma^{'} \neq \gamma$ used to evaluate that policy. Hence, by $\pi_{\gamma}$ we refer to a policy learned using $\gamma$, whereas $V^{\pi_{\gamma_{1}}}_{\gamma_{2}}(s)$ refers to the value of a state, when following policy $\pi$ $\textit{learned}$ using $\gamma_{1}$ (as defined just previously), however the value function is computed using $\gamma_{2}$. Symbolically, $V^{\pi_{\gamma_{1}}}_{\gamma_{2}}(s_{t}) = \mathbb{E}_{\pi_{\gamma_{1}}}\big\{\sum_{k=t}^{\infty} \gamma^{k-t}_{2}r_{k} \big\}$. 

\subsection{Optimality in MDP's}
\paragraph{}
If $\gamma = 1$ then we are considering undiscounted rewards, and for any infinite stream of rewards $G_{t} = \mathbb{E}\{r_{t} + r_{t+1} + r_{t+2} ..., \} = \mathbb{E}\{\sum_{k=t}^{\infty} r_{k} \}$. Since $G_{t}$ is often infinite, the \textit{gain} of a policy $\pi$ is defined, where 
\begin{align*}
\rho^{\pi} =  \lim_{T \to \infty} \frac{1}{T}\mathbb{E}_{\pi}\big\{\sum_{t=1}^{T}r_{t}\big\}.
\end{align*}
 Using the gain of a policy, an ordering, $\geq$, is defined on some policy class $\Pi$, where $\forall \pi_{1}, \pi_{2} \in \Pi$, $\pi_{1} \geq \pi_{2} \iff \rho^{\pi_{1}} \geq \rho^{\pi_{2}}$, with the strict inequality defined similarily.  It is worth noting that if we define \textbf{r}$^{\pi} = \{r_{1}, r_{2}, ... \}$ as the sequence of expected rewards from following policy $\pi$, then for any permutation $\sigma: \mathbb{N} \to \mathbb{N}$, any policy $\pi'$ whose sequence of expected rewards is $\sigma($\textbf{r}$^{\pi}) = \sigma\{r_{1}, r_{2}, ...\} = \{r_{\sigma(1)}, r_{\sigma(2)}, ... \} = \ $\textbf{r}$^{\pi'}$, then $\rho^{\pi} = \rho^{\pi'}$. Hence, the temporal ordering of rewards has no bearing on the value or gain of a policy when $\gamma = 1$. This is certainly not true for $\gamma < 1$.

\paragraph{}
Most commonly $\gamma \in [0,1)$. In this setting we can deal with infinite series of expected rewards as the partial sums converge geometrically fast in $\gamma$. The value of a state when considering discount factor $\gamma$, is  
\begin{align*}
V^{\pi_{\gamma}}_{\gamma}(s_{t}) = \mathbb{E}_{\pi_{\gamma}}\big\{\sum_{k=t}^{\infty}\gamma^{k-t}r_{k} \big\}.
\end{align*}
Since most frameworks assume rewards are bounded in some interval $[0, R_{max}]$, then $\forall \pi,$ $\forall s$, $V^{\pi}_{\gamma}(s) \leq V^{max}_{\gamma} = \frac{R_{max}}{1-\gamma}$. Such assumptions and the use of $V^{max}_{\gamma}$ are integral to theoretical bounds for algorithms and solution methods in RL and MDP's. Similar to the ordering on policies in the undiscounted setting, an ordering of policies $\pi_{\gamma}$ for fixed $\gamma \in [0,1)$ is used to order policies $\pi_{1}, \pi_{2} \in \Pi_{\gamma}$. Unlike the undiscounted setting, under $\gamma < 1$, two policies are not equivalent under permutation of the temporal sequence of rewards. Interestingly,  a value of $\gamma = 0$ is rarely used in the literature, and is often called \textit{myopic}. With $\gamma = 0$, the induced policy does not sufficiently account for the future {\it horizon} and in doing so is generally viewed to only lead to sub-optimal behaviour.

\paragraph{}
$\forall \gamma \in [0,1]$, we say a policy $\pi_{\gamma}^{*}$ is optimal if $V^{\pi_{\gamma}^{*}}_{\gamma}(s) \geq V^{\pi_{\gamma}}_{\gamma}$, $\forall \pi_{\gamma} \in \Pi_{\gamma}, \forall s \in \mathcal{S}$, where $V^{\pi_{\gamma = 1}^{*}}_{\gamma = 1} = \rho^{\pi_{\gamma}^{*}}$. Despite these notions of optimality being the most common in RL, there are other notions of optimality [13].

\subsubsection{Bias Optimality}
\paragraph{}
{Bias optimality was introduced to supplement the use of gain optimal policies when $\gamma = 1$. Since the gain of a policy only considers the asymptotic behaviour of a policy, two policies that have the same gain may experience different reward trajectories before arriving at the stationary distribution of the policy. For this reason the bias of a policy, defined as 
\begin{align*}
b^{\pi}(s) = \lim_{T \to \infty} \mathbb{E}\{\sum_{t=1}^{T}(r_{t}(s) - \rho^{\pi})\},
\end{align*}
and was introduced by [3]. For any finite state and action space MDP, a bias optimal policy always exists.

\subsubsection{$n$-discount Optimality}
\paragraph{}
$n$-discount optimality [17] introduces a hierarchical view of policy optimality in MDPs. A policy $\pi^{*}$ is $n$-discount optimal for $n \in \{-1,0,1,2,3,...\}$ if $\forall s \in \mathcal{S}$, and $\forall \pi \in \Pi$
\begin{align*}
\lim_{\gamma \to 1}(1-\gamma)^{-n}\big(V^{\pi^{*}}_{\gamma}(s) - V^{\pi}_{\gamma}(s) \big) \geq 0.
\end{align*}
It has been shown [17] that a policy is $-1-$discount optimal $\iff$ it is gain optimal, and a policy is 0-discount optimal $\iff$ it is bias optimal. Moreover, if a policy is $n$-discount optimal, then it is $m$-discount optimal $\forall m \in \{-1,0,...,n\}$. The strongest and most selective notion of optimality is that of $\pi^{*}$ being $n$-discount optimal $\forall n\geq -1$. Such a policy is referred to as being \textbf{$\infty$}-discount optimal.

\subsubsection{Blackwell Optimality}
\paragraph{}
A policy $\pi^{*}$ is Blackwell optimal if $\exists \gamma^{*} \in [0,1)$, such that
\begin{align*}
 V^{\pi^{*}}_{\gamma}(s) \geq V^{\pi}_{\gamma}(s), \ \forall \gamma \in [\gamma^{*}, 1), \ \forall \pi \in \Pi, \  \forall s \in \mathcal{S}.
\end{align*}
For finite state spaces such a $\gamma^{*}$ is attained [13]. Intuitively, a Blackwell optimal policy is one that, upon considering \textit{sufficiently far} into the future, as encoded as a planning horizon via $\gamma > \gamma^{*}$, no other policy has a higher expected cumulative reward.[17] showed that a policy is $\mathbf{\infty}$-discount optimal $\iff$ it is Blackwell optimal, hence Blackwell optimality implies all other forms of optimality, and for this reason is the focus of this work. Finally, [3] shows that for finite state and action space MDPs, there always exists a stationary and deterministic Blackwell optimal policy.

\section{Motivation for Blackwell Regret}

\paragraph{}Consider the infinite horizon MDP in Figure 1, with initial state $s_{0}$. Before proceeding, consider what you would do if you were in this MDP? What do you think is the best policy? What sort of solution would you hope that an RL algorithm return to you and how did you come to this conclusion?

\begin{figure}
\centering
\begin{tikzpicture}[scale=0.8, transform shape]
        \tikzset{node style/.style={state, 
                                    fill=gray!20!white,
                                    circle}}
            
         \node[node style]  (0) {$s_{0}$};
	\node[node style, right=of 0] (1) {$s_{1}$};
	\node[draw=none, right=of 1] (2) {$\cdots$};
	\node[node style, right=of 2] (H) {$s_{H}$};

    \draw[>=latex,
          auto=left,
          every loop]

	(0) edge[loop above] node{$\epsilon << 1$} (0)
	(0) edge[bend left, auto=left] node{0}(1)
	(1) edge[bend left, auto=left] node{0}(2)
	(1)edge[bend left, auto=right] node{0}(0)
	(2) edge[bend left, auto=left] node{0}(H)
	(2) edge[bend left, auto=right] node{0}(1)
	(H) edge[bend left, auto=right] node{0}(2)
         (H) edge[loop above] node{1} (H);
\end{tikzpicture}
\caption{\textbf{Distracting Long Horizon MDP Example.} For H $>> 1$, and initial state $s_{0}$.} \label{fig:M1}
\end{figure}
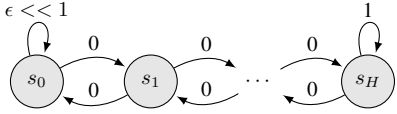

\paragraph{}
In wanting to maximize cumulative reward, it is hard to argue with any other action selection policy for the provided example than to always ``move right" towards the state $s_{H}$, and upon doing so, remain there. Why might someone consider any other policy? Why might a rational agent, with full oracle knowledge of the MDP, consider staying in $s_{0}$ to receive a reward of $\epsilon << 1$ at every time step for perpetuity? It is hard to account for why such a policy would be preferred over the policy that takes the agent to $s_{H}$, aside from laziness. Computationally, $V^{\pi_{stay}}_{\gamma} > V^{\pi_{right}}_{\gamma} \iff \epsilon > \gamma^{H}$. Hence, depending on $H$, $\epsilon, R_{max}$, the policy induced by $\gamma$ can be set appropriately in order to induce the desired policy behaviour.

\paragraph{}
Returning to $\gamma = 0$, it is widely accepted within the literature that $\pi_{\gamma}^{*}$ is myopic. We ask if it is possible for $\pi_{\gamma}^{*}$ to be myopic for $\gamma \neq 0$? Is $\gamma = 10^{-1000}$ myopic? $\gamma = 10^{-999}$? If we abstract what makes $\pi_{\gamma}^{*}$ myopic, it is the fact that $\gamma$ is not sufficiently large so as to provide the agent with the \textbf{possibility} of properly assessing the \textbf{optimal} value of states and actions, where this optimality is, in some sense, not defined with respect to $\gamma_{learn}$, the $\gamma$ used during learning, but rather with respect to some ideal policy or behaviour. Just as a child might seek to maximize immediate gratification (rewards) by eating candy before bed, which may be optimal given $\gamma = 0$, the role of a parent will be to convey the non-optimality of such a policy by noting that the yet to be experienced consequences (poor sleep, fussy behavior the following day), which can only be taken into consideration with $\gamma > 0$. This is paradoxical for the child, as they operate under $\pi^{*}_{\gamma = 0} = \pi_{eat candy}$, and hence $V^{\pi_{eatcandy}}_{\gamma = 0}$ is optimal from the perspective of $\gamma = 0$. The lesson the parent tries to impart to the child is to use $\gamma^{'} > \gamma = 0$ so that the child can learn $\pi^{*}_{\gamma^{'}}$. In this way, we intuitively compare $V^{\pi_{\gamma}}_{\gamma^{'}}$ to $V^{\pi_{\gamma^{'}}}_{\gamma^{'}}$. It is this intuition that we seek to formalize by noting that eating candy before bed does not sufficiently value the future, and for this reason we attempt to resist this myopic behaviour. In order to do so, sufficiently valuing the future means selecting a suitable $\gamma \in [0,1)$. We argue that this sufficiency is represented by the $\gamma^{*} \in [0,1)$ as found in the definition of a Blackwell optimal policy and value function.

We argue that the myopic behaviour, intuitively, is defined with respect to the strongest sense of optimality, Blackwell optimality. Note that $\forall \gamma \in [0,1)$, $\exists$ $\pi_{\gamma}^{*}$ [13]. So why, then, is $\pi_{\gamma}^{*}$ dismissed as myopic for $\gamma = 0$? It is still, after all, an optimal policy. We believe that this occurs since we intuitively understand that not all optimal policies are equal. It appears that all optimal policies are optimal, but some are more optimal than others. That is, though $\forall \gamma,$ $\pi_{\gamma}^{*}$ is optimal under $\gamma$, not all $\gamma$'s induce the policies or behaviours that a researcher prefers. This clearly highlights a common issue in machine learning, that of using a given objective function as a surrogate representation for \textit{what we want the algorithm to do}.

The hierarchical nature of policy optimality as expressed by $n$-discount optimality naturally captures this phenomenon, and we revisit this body of literature to help motivate why our sense of $\gamma$ being myopic has nothing to do with not being capable of finding $\pi_{\gamma}^{*}$, but rather, not finding $\pi_{\gamma^{*}}^{*}$, the $\gamma^{*}$ that characterizes a Blackwell optimal policy. We introduce a novel notion of regret, called \textit{Blackwell regret}, and relate the concept of a myopic $\gamma$ and policy to Blackwell regret. Our work looks at a simple class of MDPs, called \textit{distracting long horizon MDPs}, and show that even for such a simple class of environments, it is arbitrarily hard to select a $\gamma$ so as to arrive at a Blackwell optimal policy and value function that achieves zero Blackwell regret.

\section{Myopic $\gamma$, Blackwell Realizable and Blackwell Regret}
\paragraph{}
Looking at the MDP's in Figure 1, we intuitively get a sense of what the right policy is, and we agree that $\gamma = 0$ is myopic and will not produce the optimal policy. Moreover, we can check that $\forall \gamma \in [0, \sqrt[^H]{\epsilon})$ will suffer the same drawback. Since no formal definition of a myopic $\gamma$ can be found in the literature, we provide a definition.

\paragraph{Definition: Myopic $\gamma$ and Blackwell Regret:}Let $\beta$ denote a Blackwell optimal policy. Let $\gamma^{*}$ be as defined above as for Blackwell optimality, such that $V^{\beta}_{\gamma}(s) \geq V^{\pi}_{\gamma}(s)$, $\forall \gamma \in [\gamma^{*},1)$, $\forall \pi \in \Pi$, $\forall s \in \mathcal{S}$. Then for $\gamma \in [0,\gamma^{*})$,  we say $\gamma$ is myopic. Similarly, a policy, $\pi_{\gamma}$, is myopic if it is learned using a myopic $\gamma$. Similarly, we say for $\gamma \geq \gamma^{*}$ that $\gamma$ is \textit{Blackwell realizable}. For $\gamma_{learn} \in [0,1)$, we define Blackwell Regret, $\mathcal{R}_{\mathcal{B}}$. Let $\gamma' = max\{\gamma^{*}, \gamma_{learn} \}$. Then for a given policy $\pi_{\gamma_{learn}}$, \begin{align*}
\mathcal{R}_{\mathcal{B}}(\pi_{\gamma_{learn}}) = \mathbb{E}\big\{V^{\beta}_{\gamma'}(s) - V^{\pi_{\gamma_{learn}}}_{\gamma^{'}}(s) \big\},
\end{align*}
where the expectation is taken over initial state distribution. Hence, Blackwell regret is the regret accrued for using a given policy learned with $\gamma_{learn}$, when compared to a Blackwell optimal policy. Since it may be that $\gamma_{learn} > \gamma^{*}$, to ensure commensurability we require that $\gamma' = max\{\gamma^{*}, \gamma_{learn} \}$ in the definition, since under non-negative rewards, $\forall$ fixed $\pi$ $\gamma' < \gamma \in [0,1)$, $s \in \mathcal{S}$, we see that $V^{\pi}_{\gamma'}(s) < V^{\pi}_{\gamma}(s)$. It immediately follows that if $\gamma$ is myopic, and $\pi_{\gamma}^{*}$ is the optimal policy induced by $\gamma$, then $\mathcal{R}_{\mathcal{B}}(\pi_{\gamma}^{*}) > 0$. We see in the following lemma that Blackwell regret captures the very notion exemplified in the child-parent example previously given, in that for $\gamma < \gamma^{*}$, the Blackwell regret is simply the regret computed using $\gamma^{*}$. The regret of a given policy, with value evaluated at $\gamma$ is defined as 
\begin{align*}
\mathcal{R}(\pi; \gamma) = \mathbb{E}\big\{V^{\pi^{*}}_{\gamma}(s) - V^{\pi}_{\gamma}(s) \big\}.
\end{align*}

\begin{lemma}Let $\gamma \in [0,\gamma^{*})$, $\gamma^{*}$, as defined in Blackwell optimality. Then $\mathcal{R}_{\mathcal{B}}(\pi_{\gamma}) = \mathcal{R}(\pi; \gamma^{*})$.
\end{lemma}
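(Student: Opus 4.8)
The plan is to unwind both definitions at the specified regime $\gamma \in [0,\gamma^{*})$ and observe that the only nontrivial content is the identification of the Blackwell optimal policy's value at the left endpoint $\gamma^{*}$ with the ordinary optimal value at $\gamma^{*}$. First I would set $\gamma_{learn} = \gamma$ and compute $\gamma' = \max\{\gamma^{*},\gamma_{learn}\}$. Since by hypothesis $\gamma < \gamma^{*}$, this maximum is simply $\gamma^{*}$, so the Blackwell regret collapses to
\begin{align*}
\mathcal{R}_{\mathcal{B}}(\pi_{\gamma}) = \mathbb{E}\big\{V^{\beta}_{\gamma^{*}}(s) - V^{\pi_{\gamma}}_{\gamma^{*}}(s)\big\}.
\end{align*}

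The key step is then to show that $\beta$ attains the optimal value function at the discount factor $\gamma^{*}$ itself, i.e.\ $V^{\beta}_{\gamma^{*}}(s) = V^{\pi^{*}_{\gamma^{*}}}_{\gamma^{*}}(s)$ for all $s$. This follows directly from the definition of Blackwell optimality, which asserts $V^{\beta}_{\gamma}(s) \geq V^{\pi}_{\gamma}(s)$ for \emph{all} $\gamma \in [\gamma^{*},1)$, $\forall \pi$, $\forall s$. The crucial point is that the interval is closed on the left, so the inequality is in force at the boundary value $\gamma = \gamma^{*}$; hence $\beta$ is an optimal policy under the discount factor $\gamma^{*}$, and because the optimal value function is unique (even when the optimal policy is not), $V^{\beta}_{\gamma^{*}}$ coincides with $V^{\pi^{*}_{\gamma^{*}}}_{\gamma^{*}}$ pointwise.

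Substituting this identity, and reading the ordinary regret definition with evaluation discount $\gamma^{*}$ and reference policy $\pi^{*}_{\gamma^{*}}$, I would conclude
\begin{align*}
\mathcal{R}_{\mathcal{B}}(\pi_{\gamma}) = \mathbb{E}\big\{V^{\pi^{*}_{\gamma^{*}}}_{\gamma^{*}}(s) - V^{\pi_{\gamma}}_{\gamma^{*}}(s)\big\} = \mathcal{R}(\pi;\gamma^{*}),
\end{align*}
since the subtracted term $V^{\pi_{\gamma}}_{\gamma^{*}}$ is common to both and the expectation over the initial-state distribution matches in both definitions. There is no genuine computational obstacle here; the proof is essentially a matching of definitions. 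The only subtlety worth flagging explicitly is the reliance on the left endpoint being included in the Blackwell optimality guarantee together with uniqueness of the optimal value function, which is what licenses replacing the Blackwell optimal value $V^{\beta}_{\gamma^{*}}$ by the optimal value $V^{\pi^{*}_{\gamma^{*}}}_{\gamma^{*}}$ without loss. I would state this replacement carefully, as it is the single place where Blackwell optimality (rather than mere optimality at some fixed $\gamma$) is actually used.
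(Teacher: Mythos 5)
Your proof is correct and follows essentially the same route as the paper: the paper's add-and-subtract of $V^{*}_{\gamma^{*}}(s)$ followed by the cancellation $\mathbb{E}\{V^{\beta}_{\gamma^{*}}(s) - V^{*}_{\gamma^{*}}(s)\} = 0$ is exactly your substitution $V^{\beta}_{\gamma^{*}} = V^{\pi^{*}_{\gamma^{*}}}_{\gamma^{*}}$, resting on the same fact that $\gamma^{*}$ is included in the interval $[\gamma^{*},1)$ of the Blackwell optimality definition. If anything, you make explicit (via uniqueness of the optimal value function) a step the paper leaves implicit, which is a small improvement in rigor rather than a different argument.
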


\paragraph{}
Previous definitions of regret measure the difference in value of a given policy $\pi_{\gamma_{learn}}$ and the optimal value function, each evaluated with respect to a fixed $\gamma_{learn}$, since $V^{*}_{\gamma_{learn}}(s) = V^{\pi^{*}}_{\gamma_{learn}}(s)$, $\forall s \in \mathcal{S}$. As well, the $\gamma_{learn}$ used to learn the policy is typically also used to evaluate the value of that policy, and thus the regret. Blackwell regret differs in that it measures the difference in value of a given policy, $\pi_{\gamma_{learn}}$, and a Blackwell optimal policy, evaluated at $\gamma' = max$ $\{\gamma_{learn}, \gamma^{*}\}$ that favors a Blackwell optimal policy and value function.  In doing so, a policy that achieves zero Blackwell regret is either itself Blackwell optimal, or when considering a sufficiently long time horizon (as encoded by $\gamma'$), has the same value as a Blackwell optimal policy.

\section{Difficulty in Selecting Blackwell Realizable $\gamma$}

\paragraph{}
When implementing an RL algorithm that incorporates $\gamma$ discounting, typically no reasoning is provided to explain the choice of $\gamma$ used, though most often values of $\gamma$ are set around 0.9. $\gamma$ may be treated as a hyperparameter and a grid-search over values may be performed. However, even under these settings, the probability measure of non-myopic $\gamma$'s can become vanishingly small for various types of problems such as LHP's and sparse reward problems. Hence, any randomized $\gamma$ selection approach can have a vanishingly small probability of achieving non-zero Blackwell regret, as for the example in Figure 1 as $H$ grows. We show that selecting a non-myopic $\gamma$, that is, selecting a Blackwell realizable $\gamma$, is quite difficult without oracle knowledge of the problem. Moreover, even using a Blackwell realizable $\gamma$, an $\epsilon$-optimal policy may not even be gain optimal, let alone Blackwell optimal.

\paragraph{}
Ultimately we would like to consider MDP environments of a particular nature conducive to multi-task RL problems. The environments (problems) we are interested in are those such that for every task assigned to the agent, the optimal policy for that task induces a partition of the state space into non-empty subsets of transient and recurrent states, $\mathcal{S_{T}}, \mathcal{S_{A}}$. This is equivalent to saying that for each task, the optimal policy associated to the task induces a Markov chain on $\mathcal{S}$ which is unichain, or that the environment is multichain [13]. The intuition is that the environment is \textit{sufficiently controllable}, in the sense that the agent can direct the environment towards some preferable subset of the state space, and stay there indefinitely if needed, as encoded by the task MDP. For this paper we will consider a particular subset of such environments, where there are only two regions of the state space that produce non-zero rewards, and these two regions are maximally separated from one another. We demonstrate that even for such a simple class of MDPs, selecting Blackwell realizable discount factors can be arbitrarily hard.

\paragraph{}
More formally, we consider the class of MDPs with finite diameters. That is, $\exists$  $D < \infty$, such that
\begin{align*}
D = \underset{s \neq s' \in \mathcal{S}} {\mathrm{max}} \ \underset{\pi \in \Pi} {\mathrm{min}} \ \mathbb{E}_{\pi}\big\{\tau_{\pi}(s,s') \big\},
\end{align*}
 where $\tau_{\pi}(s,s')$ is the first hitting time of $s'$ when starting in state $s$, under $\pi$.  Hence, within the class of environments considered, it is possible to reach any state from any other starting state, and do so in a finite number of actions, in expectation, under \textit{some} policy. Furthermore, denote $s_{d} := s$ and $s_{H} := s'$ two states that realize the diameter $D$. Suppose $\exists$  $0 < r_{d} << R_{max} < \infty$, and $a, a' \in \mathcal{A}$ such that $r(s_{d}, a, s_{d}) = r_{d}$ and $r(s_{H}, a', s_{H}) = R_{max}$, and all other rewards are zero (e.g. $r \in \{0, r_{d}, R_{max}\}$).  Moreover, $p(s_{d} \vert a, s_{d}) = p(s_{H} \vert a', s_{H}) = 1$. Though this structure is quite specific, it abstractly represents two regions of the state space where actions exist that allow the agent to remain in those respective regions, and while remaining in that such region receive, on average, a positive rewards $r_{d}$ and $R_{max}$, respectively. An example of such an MDP can be seen in Figure 1. We call these particular environments \textit{distracting long horizon problems}, in the sense that due to the nature of the long horizon problem, the high reward region of the environment is many time steps away from an arbitrarily low reward region of the environment, with the rest of the environment producing no rewards. Given a state, such as $s_{0}$ in Figure 1, under a Blackwell optimal policy, the agent will not be distracted by the nearby, yet miniscule rewards, and will traverse to the high reward region, $s_{H}$. This setting is a slight step up in complexity from a simple goal based MDP where only a single state produces a positive reward.  We show that with oracle domain knowledge of features of the MDP (which we state below), one can select a Blackwell realizable $\gamma$ and solve for $\pi_{\gamma^{*}}^{*}$ in such distracting MDPs.  However, even with oracle knowledge of $\gamma^{*}$, $\forall \epsilon > 0$ selected, one may receive an $\epsilon$-optimal policy and value function that is not gain optimal, since for LHPs the value of a gain optimal policy and Blackwell optimal policy may differ by less than $\epsilon$. Interestingly, these results suggest a multi-step learning process for distracting MDP problems may be possible, which we leave for future work.

\paragraph{}
	We start with a proposition that shows that for this class of MDPs, being Blackwell optimal are exactly those policies that are \textit{not distracted}, in the sense that they are those policies that act solely to minimize the hitting time of the high reward state $s_{H}$.

\begin{proposition}Let $\mathcal{M}$ be a distracting long horizon MDP as described above. Then $\pi$ is Blackwell optimal $\iff$ $\pi \in \underset{\pi' \in \Pi} {\mathrm{arg min}}$ $\mathbb{E}_{\pi'}\big\{\tau_{\pi'}(s,s_{H}) \big\}$, $\forall s \in \mathcal{S}$.
\end{proposition}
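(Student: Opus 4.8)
The plan is to leverage the hierarchy recalled in the excerpt: a Blackwell optimal policy is $\infty$-discount optimal, hence in particular $(-1)$-discount (gain) optimal and $0$-discount (bias) optimal. First I would note that a finite diameter forces $\mathcal{M}$ to be communicating, so the optimal gain is constant in $s$ and equals $R_{max}$: this is the unique maximal per-step reward, and since every other reward is either $0$ or $r_d < R_{max}$, it can be earned asymptotically only by occupying $s_H$ under $a'$. A stationary deterministic policy thus attains gain $R_{max}$ from a state exactly when its induced chain is absorbed almost surely into the recurrent class $\{s_H\}$ taking $a'$, i.e. it reaches $s_H$ and stays. Consequently every Blackwell optimal $\pi$ must reach $s_H$ almost surely from each $s$.

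Restricting attention to these gain-optimal policies, the next step is to compute the bias and read off what bias optimality demands. Writing $\rho = R_{max}$, the summand $r_t - \rho$ is $0$ once $s_H$ is reached, equals $-R_{max}$ at every zero-reward step, and equals $r_d - R_{max} < 0$ at each step spent idling in $s_d$ under $a$; summing gives $b^{\pi}(s) = -R_{max}\,\mathbb{E}_{\pi}[\tau_{\pi}(s,s_H)] + r_d\,\mathbb{E}_{\pi}[\,\#\{\text{idle steps at } s_d\}\,]$. Each idle step at $s_d$ lengthens the hitting time by at least one unit (costing $R_{max}$) while adding only $r_d < R_{max}$, so idling strictly lowers the bias; the bias is therefore maximized precisely by policies that never idle at $s_d$ and minimize $\mathbb{E}_{\pi}[\tau_{\pi}(s,s_H)]$. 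Since Blackwell optimality forces bias optimality, this already yields the forward implication: $\pi$ Blackwell optimal $\Rightarrow$ $\pi \in \arg\min_{\pi'}\mathbb{E}_{\pi'}[\tau_{\pi'}(s,s_H)]$ for all $s$.

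For the converse I would take any hitting-time minimizer $\pi$ and show it dominates every competitor for $\gamma$ near $1$. A minimizer never idles at $s_d$, so it collects no $r_d$ and its value is simply $V^{\pi}_{\gamma}(s) = \frac{R_{max}}{1-\gamma}\,\mathbb{E}_{\pi}[\gamma^{\tau_{\pi}(s,s_H)}]$. Expanding $\mathbb{E}[\gamma^{\tau}]$ in powers of $(1-\gamma)$ recovers the gain term $\frac{R_{max}}{1-\gamma}$, which beats every policy of gain below $R_{max}$ once $\gamma$ is close enough to $1$, followed by the bias term $-R_{max}\,\mathbb{E}[\tau]$, which beats every $r_d$-collecting or slower gain-optimal policy by the bias computation above. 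What remains is to compare $\pi$ against the other hitting-time minimizers and to verify that $\pi$ wins at every order, yielding $n$-discount optimality for all $n$ and hence Blackwell optimality.

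The hard part is exactly this last comparison, and two conventions must be fixed for it to go through. First, the first hitting time should be read as the first return ($t \geq 1$) at the target, so that minimizing $\mathbb{E}[\tau(s_H,s_H)]$ forces the self-loop action $a'$ and genuine absorption at $s_H$; otherwise a minimizer could drift out of $s_H$ and fail to be even gain optimal, breaking the equivalence. Second, the higher-order coefficients of $\mathbb{E}[\gamma^{\tau}]$ depend on the whole law of $\tau$ (its variance already enters at order $(1-\gamma)$), so two minimizers with equal mean but unequal higher moments could separate there. This difficulty is vacuous for deterministic transitions, as in the motivating MDP, where $\tau_{\pi}(s,s_H)$ is the graph distance $d(s,s_H)$ and every shortest-path policy shares the identical value $\frac{R_{max}}{1-\gamma}\gamma^{d(s,s_H)}$; I would therefore either present the argument under deterministic transitions or strengthen the minimality hypothesis so that all minimizers share the hitting-time distribution. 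Securing this is the crux of the proof.
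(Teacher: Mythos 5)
Your forward direction is the paper's route made honest: the paper argues Blackwell optimality $\Rightarrow$ bias optimality and then asserts that bias optimality ``clearly'' forces hitting-time minimization, while you actually establish the gain step (gain $R_{max}$ exactly when the induced chain is absorbed at $s_H$ under $a'$) and compute the bias, $b^{\pi}(s) = -R_{max}\,\mathbb{E}_{\pi}[\tau_{\pi}(s,s_H)] + r_{d}\,\mathbb{E}_{\pi}[\#\{\text{idle steps at } s_d\}]$, so that each idle step visibly trades a cost of $R_{max}$ for a gain of $r_d < R_{max}$. Your converse also genuinely departs from the paper's: the paper sets $\gamma^{*} = \sqrt[D]{r_d/R_{max}}$ as the crossing point of $\frac{r_d}{1-\gamma}$ and $\frac{\gamma^{D}R_{max}}{1-\gamma}$ and declares that dominance of a minimizer over every policy for all $\gamma \geq \gamma^{*}$ ``clearly follows,'' whereas you write $V^{\pi}_{\gamma}(s) = \frac{R_{max}}{1-\gamma}\,\mathbb{E}_{\pi}[\gamma^{\tau}]$ and expand in powers of $1-\gamma$, which is precisely what exposes where that assertion can fail. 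Your first-return convention at $s_H$ is also a needed reading the paper leaves implicit; without it a minimizer could exit $s_H$ and not even be gain optimal.

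The difficulty you flag as the crux is a genuine gap --- in the proposition and the paper's proof, not merely in your attempt. Writing $V^{\beta}_{\gamma}(s_d) = \frac{\gamma^{D}R_{max}}{1-\gamma}$ silently replaces $\mathbb{E}[\gamma^{\tau}]$ by $\gamma^{\mathbb{E}[\tau]}$, which is valid only when hitting times are deterministic; by Jensen (convexity of $x \mapsto \gamma^{x}$) one has $\mathbb{E}[\gamma^{\tau}] \geq \gamma^{\mathbb{E}[\tau]}$, strictly when $\tau$ is random. Concretely, if from some state one continuation reaches $s_H$ in exactly $3$ steps while another reaches it in $1$ step with probability $\frac{1}{2}$ and in $5$ steps with probability $\frac{1}{2}$, both minimize $\mathbb{E}[\tau]=3$, yet $\frac{1}{2}\gamma + \frac{1}{2}\gamma^{5} > \gamma^{3}$ for \emph{all} $\gamma \in (0,1)$, so the first policy is a hitting-time minimizer that is not Blackwell optimal and the $\Leftarrow$ direction fails as stated; nothing in the definition of a distracting long horizon MDP excludes such stochastic branching (indeed the paper's own experimental MDP in Figure~2 has stochastic transitions). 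Your proposed repairs --- restrict to deterministic transitions as in Figure~1, or strengthen the hypothesis so that all minimizers share the law of $\tau$ (equivalently, characterize Blackwell optimality by maximizing $\mathbb{E}_{\pi}[\gamma^{\tau}]$ near $\gamma = 1$ rather than minimizing $\mathbb{E}_{\pi}[\tau]$) --- are exactly the right ones, and your writeup is more rigorous than the paper's proof wherever the two overlap.
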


We now provide results that show with oracle knowledge of $D, r_{d}, R_{max}$ we may select for $\gamma^{*}$ and thus for a Blackwell realizable discount factor. 

\begin{corollary}For any distracting long horizon MDP $\mathcal{M}$, as described above, if $D, r_{d}, R_{max}$ is known, then an RL algorithm can select $\gamma \geq \gamma^{*}$ and hence select a Blackwell realizable discount factor.
\end{corollary}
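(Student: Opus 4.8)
The plan is to turn the qualitative characterization of the Proposition into a quantitative, computable upper bound on $\gamma^{*}$. By the Proposition, a Blackwell optimal policy $\beta$ is precisely one that minimizes the expected hitting time of $s_{H}$ from every state; since $s_{d}, s_{H}$ realize the diameter, the minimal expected hitting time from $s_{d}$ is exactly $D$, and from any other state it is at most $D$. The strongest ``distraction'' available to a competing policy is to take the self-loop at $s_{d}$ forever, collecting $r_{d}$ each step. First I would write the two relevant values at the worst state $s_{d}$: the stay-put value $V^{\pi_{stay}}_{\gamma}(s_{d}) = \frac{r_{d}}{1-\gamma}$, and a uniform lower bound on the Blackwell optimal value $V^{\beta}_{\gamma}(s_{d}) \geq \gamma^{D}\frac{R_{max}}{1-\gamma}$, where the exponent $D$ comes from the diameter and, for stochastic transitions, from Jensen's inequality applied to the convex map $k \mapsto \gamma^{k}$, giving $\mathbb{E}[\gamma^{\tau_{\beta}}] \geq \gamma^{\mathbb{E}[\tau_{\beta}]} \geq \gamma^{D}$.

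Setting the stay-put value below the Blackwell optimal lower bound yields the crossover condition $\gamma^{D} R_{max} \geq r_{d}$, i.e. $\gamma \geq \gamma_{0} := \left(\frac{r_{d}}{R_{max}}\right)^{1/D}$, which is strictly less than $1$ because $r_{d} < R_{max}$. Since the oracle supplies $D, r_{d}, R_{max}$, the algorithm can compute $\gamma_{0}$ exactly and select any $\gamma \in [\gamma_{0}, 1)$. It then remains to verify that $\gamma_{0} \geq \gamma^{*}$, that is, that for every $\gamma \geq \gamma_{0}$ the policy $\beta$ dominates every policy at every state, so that the selected $\gamma$ is genuinely Blackwell realizable.

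To establish this domination I would bound an arbitrary policy from above by splitting its reward stream at the first visit $\tau_{\pi}$ to $s_{H}$: before $\tau_{\pi}$ each reward is at most $r_{d}$ and afterwards at most $R_{max}$, which gives $V^{\pi}_{\gamma}(s) \leq \frac{r_{d}}{1-\gamma} + \mathbb{E}[\gamma^{\tau_{\pi}}]\frac{R_{max}-r_{d}}{1-\gamma}$. Comparing this to $V^{\beta}_{\gamma}(s) = \mathbb{E}[\gamma^{\tau_{\beta}}]\frac{R_{max}}{1-\gamma}$ --- and noting that a hitting-time minimizer collects no $r_{d}$, since taking the self-loop at $s_{d}$ strictly delays arrival at $s_{H}$ --- reduces the claim to an inequality in the hitting-time transforms that is tightest exactly at $s_{d}$ against the stay-put competitor, because states nearer $s_{H}$ have a larger $\gamma^{h(s)}$ and a strictly smaller attainable distraction value. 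The main obstacle is this last verification: in full generality one must argue that minimizing expected hitting time also governs $\mathbb{E}[\gamma^{\tau}]$ in the regime $\gamma \geq \gamma_{0}$, which is immediate when the transitions along the shortest route are deterministic (as in Figure 1, where $\tau_{\beta} \leq \tau_{\pi}$ pathwise) and otherwise follows from the Blackwell optimality of $\beta$ granted by the Proposition. Concluding, $\gamma_{0} \geq \gamma^{*}$, so an algorithm with oracle knowledge of $D, r_{d}, R_{max}$ can select $\gamma \in [\gamma_{0}, 1)$ and thereby a Blackwell realizable discount factor.
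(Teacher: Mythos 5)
Your overall route is the same as the paper's: the paper's proof of this corollary simply imports, from the proof of Proposition~2, the identity $\gamma^{*} = \sqrt[D]{r_{d}/R_{max}}$ --- obtained from exactly your crossover comparison of $\frac{r_{d}}{1-\gamma}$ against $\frac{\gamma^{D}R_{max}}{1-\gamma}$ at $s_{d}$ --- and concludes that oracle knowledge of $D, r_{d}, R_{max}$ makes this quantity computable, so any $\gamma \geq \gamma^{*}$ may be selected. Two of your additions improve on the paper: the Jensen step $\mathbb{E}[\gamma^{\tau}] \geq \gamma^{\mathbb{E}[\tau]} \geq \gamma^{D}$ correctly handles stochastic hitting times, and framing $\gamma_{0}$ as an upper bound on $\gamma^{*}$ (rather than its exact value) is all the corollary actually needs.

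However, your verification that $\gamma_{0} \geq \gamma^{*}$ --- the load-bearing step --- fails as written. The split bound $V^{\pi}_{\gamma}(s) \leq \frac{r_{d}}{1-\gamma} + \mathbb{E}[\gamma^{\tau_{\pi}}]\frac{R_{max}-r_{d}}{1-\gamma}$ credits the competitor with $r_{d}$ on \emph{every} step before $\tau_{\pi}$, including traversal steps during which it is not at $s_{d}$, and this looseness is fatal. Concretely, against the policy that dallies $k$ steps at $s_{d}$ and then traverses (so $\mathbb{E}[\gamma^{\tau_{\pi}}] = \gamma^{k+D}$ in the deterministic case), your comparison demands $\gamma^{D}R_{max} \geq r_{d} + \gamma^{k+D}(R_{max}-r_{d})$; at $\gamma = \gamma_{0}$, where $\gamma_{0}^{D}R_{max} = r_{d}$, this reads $r_{d} \geq r_{d} + \gamma_{0}^{k+D}(R_{max}-r_{d})$, false for every $k \geq 0$, and solving the $k=0$ case shows the split bound certifies no threshold below $\gamma = 1$. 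In particular, your claim that the inequality is ``tightest at $s_{d}$ against the stay-put competitor'' is wrong under this bound: stay-put ($\mathbb{E}[\gamma^{\tau}]=0$) is handled exactly, but the mixed dally-then-go competitors are the binding ones and the bound cannot beat them. The correct closing step must couple dallying with delay: $V^{(k)}_{\gamma}(s_{d}) = \frac{r_{d}(1-\gamma^{k})}{1-\gamma} + \frac{\gamma^{k+D}R_{max}}{1-\gamma}$, whence $V^{\beta}_{\gamma}(s_{d}) - V^{(k)}_{\gamma}(s_{d}) = \frac{(1-\gamma^{k})(\gamma^{D}R_{max}-r_{d})}{1-\gamma} \geq 0$ uniformly in $k$ precisely when $\gamma \geq \gamma_{0}$. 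Your fallback for the stochastic case --- that it ``follows from the Blackwell optimality of $\beta$ granted by the Proposition'' --- is circular for this purpose: the Proposition's statement guarantees only that \emph{some} $\gamma^{*}$ exists, and Blackwell optimality yields domination on $[\gamma^{*},1)$, not on $[\gamma_{0},1)$; the inequality $\gamma_{0} \geq \gamma^{*}$ is exactly what remains to be shown. In fairness, the paper sidesteps all of this by citing the one-line assertion inside Proposition~2's proof, so your gap mirrors one already buried there --- but as a self-contained argument, your domination step does not go through.
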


The following corollary shows that with oracle knowledge of only two of the following properties: $D, r_{d}$ \textit{and} $R_{max}$, then after committing to particular $\gamma \in [0,1)$ there exists a distracting long horizon MDP that is consistent with those MDP properties wherein $\pi^{*}_{\gamma}$ is not gain optimal, but $\forall  \gamma_{2} > \gamma$, $\pi^{*}_{\gamma_{2}}$ is Blackwell optimal.

\begin{corollary}Suppose for every distracting long horizon MDP $\mathcal{M}$, as described above, only two of $\{D, r_{d}, R_{max}\}$ is known. Let $\mathcal{K} \subset  \{D, r_{d}, R_{max}\}$, $\vert \mathcal{K} \vert = 2$, denote the MDP features known with oracle knowledge. Then $\forall \gamma \in [0,1)$ $\exists$ $\mathcal{M}$ consistent with $\mathcal{K}$, such that $\pi_{\gamma}^{*}$ is not gain optimal but $\forall \gamma' \in (\gamma,1)$, $\pi_{\gamma'}^{*}$ is Blackwell optimal.
\end{corollary}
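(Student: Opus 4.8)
The plan is to reduce the statement to a single scalar comparison at the distractor state $s_d$, and then to use the one unknown feature as a tuning knob that places the Blackwell threshold $\gamma^{*}$ exactly at the committed $\gamma$. Concretely I would instantiate a chain as in Figure~\ref{fig:M1}: states $s_d = s_0, s_1, \dots, s_H$ with self-loops $r(s_d,a,s_d) = r_d$, $r(s_H,a',s_H) = R_{max}$ and all other rewards zero, so that $(s_d,s_H)$ realizes the diameter. By the preceding Proposition (the Blackwell characterization), the only Blackwell optimal behaviour is to minimize the hitting time of $s_H$, i.e. the ``always move right'' policy $\pi_{right}$, while the competing temptation is the ``stay at $s_d$'' policy $\pi_{stay}$, whose gain is $r_d < R_{max}$, so $\pi_{stay}$ is \emph{never} gain optimal. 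The computation from the motivation, $V^{\pi_{stay}}_{\gamma}(s_d) = \tfrac{r_d}{1-\gamma}$ versus $V^{\pi_{right}}_{\gamma}(s_d) \propto \gamma^{D}\tfrac{R_{max}}{1-\gamma}$, shows the two values cross exactly when $\gamma^{D} R_{max} = r_d$ (absorbing the $O(1)$ self-loop timing constant), and one checks that, being the diameter endpoint, $s_d$ carries the \emph{largest} such crossover among all states, so this one comparison governs optimality everywhere. Hence $\gamma^{*} = (r_d/R_{max})^{1/D}$, and the task becomes choosing the unknown feature so that $\gamma^{*} = \gamma$.

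When the unknown feature is $R_{max}$ or $r_d$ (i.e. $\mathcal{K}=\{D,r_d\}$ or $\mathcal{K}=\{D,R_{max}\}$), I would set $R_{max} = r_d\,\gamma^{-D}$, respectively $r_d = R_{max}\,\gamma^{D}$; both are admissible since $\gamma^{D}<1$ keeps $0 < r_d \ll R_{max} < \infty$, and both force $\gamma^{D}R_{max} = r_d$, placing the crossover on $\gamma$. For $\gamma' > \gamma$ we get $(\gamma')^{D}R_{max} > \gamma^{D}R_{max} = r_d$, so $\pi_{right}$ strictly dominates at every state and is Blackwell optimal by the Proposition; for $\gamma' < \gamma$ the inequality reverses. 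At $\gamma$ itself the two values coincide, so $\pi_{stay}$ is one optimal policy, and selecting it as $\pi^{*}_{\gamma}$ yields an optimal policy that is not gain optimal, which is all the statement requires.

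The remaining case $\mathcal{K} = \{r_d, R_{max}\}$, with $D$ unknown, is where I expect the real obstacle. Here the required value $D^{*} = \ln(r_d/R_{max})/\ln\gamma$ is generically non-integer, and with a deterministic chain the crossover $(r_d/R_{max})^{1/D}$ can only be placed on a discrete grid in $D$; one verifies that an integer $D$ making $\pi_{stay}$ optimal at $\gamma$ simultaneously keeps it optimal on a whole interval $(\gamma,\gamma'')$, violating the ``for all $\gamma' > \gamma$'' requirement. I would resolve this by letting the $s_d \to s_H$ route be \emph{stochastic}, so the relevant quantity becomes the hitting-time generating function $\phi(\gamma) = \mathbb{E}_{\pi_{right}}\{\gamma^{\tau(s_d,s_H)}\}$ in place of $\gamma^{D}$; the crossover condition is then $\phi(\gamma) = r_d/R_{max}$, and since $\phi$ is continuous and strictly increasing on $(0,1)$, dominance of $\pi_{right}$ at every $\gamma' > \gamma$ is automatic. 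Concretely I would use a two-point hitting time $\tau \in \{L, L+1\}$ with slip probability $q$, giving $\phi(\gamma) = \gamma^{L}\big(q + (1-q)\gamma\big)$; taking $L = \lfloor D^{*}\rfloor$ places $r_d/R_{max} = \gamma^{D^{*}}$ inside $[\gamma^{L+1},\gamma^{L}]$, so the intermediate value theorem gives a $q \in [0,1]$ with $\phi(\gamma) = r_d/R_{max}$ exactly. The resulting non-integer diameter $L + 1 - q$ is unconstrained here, since $D$ is the unknown feature.

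Two subtleties I would address explicitly. First, strict optimality of $\pi_{stay}$ at $\gamma$ is \emph{impossible} if $\pi_{right}$ must be optimal for every $\gamma' > \gamma$: continuity of $V^{\pi}_{\gamma}$ in $\gamma$ forces the crossover to sit exactly at $\gamma$, so $\gamma$ is necessarily a point of non-unique optimality and one selects the non-gain-optimal $\pi_{stay}$ as $\pi^{*}_{\gamma}$. Second, the boundary value $\gamma = 0$ is degenerate, since any route to $s_H$ has $\tau \geq 1$, forcing $\phi(\gamma') \to 0$ as $\gamma' \to 0^{+}$ so $\pi_{stay}$ wins for all small $\gamma'$; the construction therefore delivers the claim for $\gamma \in (0,1)$, with $\gamma = 0$ recovered only in the degenerate limit $r_d \to 0$. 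The genuine work, beyond the easy direction of the preceding Corollary, is the stochastic-route argument that realizes an arbitrary non-integer effective horizon precisely when $D$ is the hidden feature.
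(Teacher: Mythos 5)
Your proposal is correct in substance and follows the same skeleton as the paper's proof: reduce everything to the scalar crossover $\gamma^{D}R_{max}=r_{d}$ at $s_{d}$, so that $\gamma^{*}=(r_{d}/R_{max})^{1/D}$, then use the one feature in $\{D,r_{d},R_{max}\}\setminus\mathcal{K}$ as a knob to place the crossover exactly at the committed $\gamma$. But you execute it more carefully in the two places where the paper is loose. The paper works only the case $\mathcal{K}=\{r_{d},R_{max}\}$ explicitly, choosing $D=\sup\{D'\,\vert\, D'<\log(r_{d}/R_{max})/\log(\gamma)\}$ (the printed difference of logs is evidently a typo for the quotient) and dismissing the other two cases as following ``without loss of generality''; read literally, this sets $D$ equal to the generically non-integer value $\log(r_{d}/R_{max})/\log(\gamma)$ and silently assumes a distracting MDP realizing that travel time exists, which is exactly the integrality obstruction you isolate. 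Your fix for the unknown-$D$ case --- replacing $\gamma^{D}$ by the hitting-time generating function $\phi(\gamma)=\mathbb{E}\{\gamma^{\tau}\}$, realizing $\tau\in\{L,L+1\}$ with slip probability $q$, and applying the intermediate value theorem in $q$ --- is the argument the paper's proof needs but does not supply, and it matches the paper's own experimental MDP in Figure 2, whose $s_{d}\to s_{H}$ transition is stochastic for precisely this reason. Your observation that continuity forces the crossover to sit exactly at $\gamma$, so that $\pi^{*}_{\gamma}$ is a tie and one must select the non-gain-optimal member, is also right, and is tacitly what the paper's choice of $D$ accomplishes, even though its displayed strict inequality at $\gamma$ is then internally inconsistent. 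The two easy cases (tuning $r_{d}$ or $R_{max}$) are handled the same way in both arguments.

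One caveat extends beyond your $\gamma=0$ remark. In the unknown-$D$ case your IVT step needs $L=\lfloor \ln(r_{d}/R_{max})/\ln\gamma\rfloor\geq 1$, i.e.\ $\gamma\geq r_{d}/R_{max}$. For $\gamma\in(0,r_{d}/R_{max})$ no distracting MDP consistent with the fixed $r_{d},R_{max}$ can satisfy the conclusion at all: since $\tau\geq 1$, every policy has $\mathbb{E}\{(\gamma')^{\tau}\}\leq\gamma'<r_{d}/R_{max}$ for $\gamma'\in(\gamma,\,r_{d}/R_{max})$, so $\pi^{*}_{\gamma'}$ stays at $s_{d}$ and, by the preceding Proposition, is not Blackwell optimal. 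So your claim that the construction ``delivers the claim for $\gamma\in(0,1)$'' overshoots in that one case; the corollary as stated (and the paper's proof of it) fails on the interval $[0,r_{d}/R_{max})$ when $\mathcal{K}=\{r_{d},R_{max}\}$, by the same mechanism as the $\gamma=0$ degeneracy you flagged. Since $r_{d}\ll R_{max}$ this interval is small, but it should be stated as a restriction rather than absorbed into the boundary case.
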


These corollaries demonstrate that there exists sufficient domain knowledge for distracting long horizon MDPs to allow for the computation and use of a Blackwell realizable $\gamma$, however without complete domain knowledge of $\mathcal{M}$, any $\gamma$ selected may be myopic and may not even lead to a gain optimal policyl. These results suggest for distracting long horizon MDPs that a multi-step learning approach may be best, where in the first phase the agent learns the $D, r_{d}, R_{max}$, and then in the second phase, uses this knowledge to select for a non-myopic $\gamma$ to solve the task, however we leave such results for future work.

The next results show that even under with access to a Blackwell realizable $\gamma$, for distracting long horizon problems, then the value of a policy that is not gain optimal and that of a Blackwell optimal policy may be arbitrarily close (e.g. within $\epsilon$), hence any learning algorithm that returns a policy that is $\epsilon$-accurate to a Blackwell optimal policy may not even be gain optimal. Further, we provide empirical results that mirror our theoretical results.

\begin{corollary}Let $\epsilon > 0$. $\exists$ a distracting MDP, $\mathcal{M}$,  with Blackwell optimal $\gamma^{*} \in (0,1)$, and associated Blackwell optimal policy $\beta$, such that\\ $\vert\vert V^{\beta}_{\gamma^{*}}-V^{\pi_{\gamma^{*}}}_{\gamma^{*}}\vert\vert_{\infty} < \epsilon$, where $\pi_{\gamma^{*}}$ is not gain optimal.

\end{corollary}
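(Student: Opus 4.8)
The plan is to produce an explicit witness, the chain of Figure~1, and tune its parameters so that at the Blackwell threshold $\gamma^*$ a \emph{distracted} policy that parks at $s_d$ has exactly the same discounted value as a Blackwell optimal policy at every state, yet fails to be gain optimal. First I would fix the chain with states $s_d = s_0, s_1, \ldots, s_D = s_H$ realizing the diameter $D$, with self-loops $r(s_d,a,s_d) = r_d$ and $r(s_H,a',s_H) = R_{max}$, all other rewards zero, and $0 < r_d \ll R_{max}$. By the Proposition (Blackwell optimal $\iff$ minimizing the hitting time of $s_H$), the Blackwell optimal policy $\beta$ is the ``always move right'' policy, with $V^{\beta}_{\gamma}(s_k) = \gamma^{D-k} R_{max}/(1-\gamma)$.

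Next I would pin down $\gamma^*$. The only distraction is the self-loop at $s_d$, so the binding comparison is between parking there, with value $r_d/(1-\gamma)$, and moving right, with value $\gamma^D R_{max}/(1-\gamma)$; these agree exactly when $\gamma^D R_{max} = r_d$, which leads me to set
\begin{equation*}
\gamma^* = \left(\frac{r_d}{R_{max}}\right)^{1/D} \in (0,1).
\end{equation*}
I would then check, using $\gamma^D R_{max} \geq (\gamma^*)^D R_{max} = r_d$ for all $\gamma \geq \gamma^*$ together with the Proposition, that $\gamma^*$ really is the Blackwell threshold, i.e.\ that no other state nor any competing deviation (such as detouring back to collect $r_d$) forces a larger value.

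I would take $\pi_{\gamma^*}$ to be the distracted policy that chooses the self-loop $a$ at $s_d$ and agrees with $\beta$ elsewhere. At $\gamma^*$ the identity $(\gamma^*)^D R_{max} = r_d$ yields $V^{\beta}_{\gamma^*}(s_d) = (\gamma^*)^D R_{max}/(1-\gamma^*) = r_d/(1-\gamma^*) = V^{\pi_{\gamma^*}}_{\gamma^*}(s_d)$, so the values coincide at $s_d$; and since $s_d$ is the diameter-realizing endpoint of the chain, from every $s_k$ with $k \geq 1$ both policies trace identical trajectories to $s_H$ and hence share their values. Therefore $\|V^{\beta}_{\gamma^*} - V^{\pi_{\gamma^*}}_{\gamma^*}\|_\infty = 0 < \epsilon$. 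Note this makes $\pi_{\gamma^*}$ itself value-optimal at $\gamma^*$ (the tie at $s_d$), so it is a legitimate policy a learner could return at $\gamma^*$. Yet it is not gain optimal: parked at $s_d$ it earns gain $r_d$, whereas $\rho^{\beta}(s_d) = R_{max} > r_d$.

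The hard part is not the cancellation at $s_d$, which the choice of $\gamma^*$ forces, but controlling the sup norm over all of $\mathcal{S}$: I must ensure the single-state deviation does not propagate into larger discrepancies elsewhere. Placing $s_d$ as the diameter-realizing endpoint makes this clean, since no optimal trajectory from another state revisits $s_d$; the subtlety to verify is precisely that $(r_d/R_{max})^{1/D}$, and not some competing deviation, determines the threshold. If a strictly positive gap is preferred to fit the ``$\epsilon$-accurate'' reading, I would instead evaluate at a $\gamma$ slightly above $\gamma^*$ while driving $D$ large, keeping $(\gamma^D R_{max} - r_d)/(1-\gamma) < \epsilon$ while $\beta$ strictly dominates.
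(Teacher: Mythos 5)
Your proposal is correct and follows essentially the same route as the paper: the same distracting chain with $\gamma^{*} = (r_d/R_{max})^{1/D}$, the same deviant policy that agrees with $\beta$ everywhere except taking the $r_d$ self-loop at $s_d$, the same observation that values coincide off $s_d$ while the gain drops to $r_d < R_{max}$. Your sharpening that the tie at the threshold makes the sup-norm exactly zero (with the $\gamma$-slightly-above-$\gamma^{*}$ variant for a strictly positive gap) is a minor refinement of the paper's statement that $r_d, D, R_{max}$ ``can be set'' to achieve the $\epsilon$ bound, not a different argument.
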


\subsection{Policy Gaps and Pivot States}
\paragraph{}
Prior work has been done in putting forward measurements that can act as indicators of when learning an optimal policy may be difficult [10, 2] . [2] discuss the notion of an {\it action gap} at a given state $s$ that is the difference in expected value at that state between the optimal action and the second best action. More formally, let $A^{-}_{\pi}(s) = \mathcal{A} \setminus \{\pi^{*}(s)\}$. Then,
\begin{align*}
AG_{\pi^{*}}(s) = V^{\pi^{*}}_{\gamma}(s) - \underset{a \in A^{-}_{\pi}(s)} {\mathrm{max}}Q^{\pi^{*}}_{\gamma}(s,a).
\end{align*} 
[10] introduce the notion of the maximal action-gap (MAG) of a policy $\pi$  as
\begin{align*}
MAG(\mathcal{S}; \pi) = \underset{s \in \mathcal{S}} {\mathrm{max}}\big\{\underset{a \in \mathcal{A}} {\mathrm{max}} Q^{\pi}(s,a) - \underset{a \in \mathcal{A}} {\mathrm{min}} Q^{\pi}(s,a) \big\}.
\end{align*}
 Both studies argue that if their respective measurement is small, then learning the optimal policy can be hard, as it is hard to discern the value of the optimal action from one that is sub-optimal. While each may be useful, we argue that since the action gap measures the difference in value associated with abstaining {\it just once} from taking the optimal action, it doesn't truly measure the difference in value between two policies, nor the associated difficulty in discerning the value of one policy over another. The maximal action gap suffers from this as well. Moreover, [10] that under certain conditions the maximal action gap collapses to zero, making learning arbitrarily hard. However, in the Appendix section we prove that this condition only occurs in environments where the set of states that receive non-zero rewards must be transient $\forall \pi$.

We introduce a novel measurement, the policy gap, which is motivated by the action gap, discussed above. For $\mathcal{S}, \mathcal{A}, p, R$ fixed, policy $\pi$ and $s \in \mathcal{S}$, we define the policy gap, $PG_{\pi}(s)$,
\begin{align*}
PG_{\pi_{\gamma}}(s) =\underset{\pi_{\gamma}(s) \neq \pi'_{\gamma}(s)} {\underset{\pi'_{\gamma} \in \Pi_{\gamma}} {\mathrm{min}}} \bigg\{\vert V^{\pi_{\gamma}}_{\gamma}(s) -  V^{\pi'_{\gamma}}_{\gamma}(s) \vert \bigg\}.
\end{align*}

The policy gap at state $s$ is the smallest difference in value at that state between the query policy and any other policy that differs at $s$. Intuitively, if $\forall s$, $PG_{\pi_{\gamma}}(s)$ is large, then the ability to discern the optimal action and thereby learn a Blackwell optimal policy becomes easier. Conversely, if $\exists s \in \mathcal{S}$, sucht that  $PG_{\pi_{\beta}^{*}}(s) \to 0$ then at state $s$, called a \textit{pivot state}, the ability to discern the value of a Blackwell optimal policy, $\beta$, and another policy becomes increasingly hard. For an MDP where Blackwell optimal policies are non-trivial, that is not all policies are Blackwell optimal, and therefore $\gamma^{*} > 0$, then there exists such a pivot state. For the Theorem below, we use $\beta$ for a Blackwell optimal policy, and for any $\gamma$, we use $V^{\beta}_{\gamma}$ to represent the value function computed follow the Blackwell optimal policy and with discount factor $\gamma$.

\begin{theorem}(Pivot State Existence) Let $\beta$ be a non-trivial Blackwell optimal policy with $\gamma^{*} \in (0,1)$, where $\gamma^{*}$ as defined above such that $V^{\beta}_{\gamma}(s) \geq V^{\pi}_{\gamma}(s)$, $\forall \pi, s$, $\forall \gamma \in [\gamma^{*}, 1)$. If $\gamma < \gamma^{*} \implies \exists$ a pivot state $\tilde{s} \in \mathcal{S}$, $\exists$ $\tilde{\pi}_{\gamma} \in \Pi_{\gamma}$ where $\tilde{\pi}_{\gamma}(\tilde{s}) \neq \beta(\tilde{s})$, and
\begin{align*}
V^{\beta}_{\gamma}(\tilde{s}) < V^{\tilde{\pi}_{\gamma}}_{\gamma}(\tilde{s}) <V^{\tilde{\pi}_{\gamma}}_{\gamma^{*}}(\tilde{s}) \leq V^{\beta}_{\gamma^{*}}(\tilde{s}).
\end{align*}

Moreover,  $\underset{\gamma \to \gamma^{*}} {\mathrm{lim}} PG_{\beta_{\gamma}}(\tilde{s}) \to 0$.
\end{theorem}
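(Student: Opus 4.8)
The plan is to locate a single pivot state $\tilde s$ together with a single competing policy $\tilde\pi$ that simultaneously yield all four inequalities and drive the gap to zero, exploiting that $\mathcal{M}$ has finitely many deterministic policies and that each map $\gamma \mapsto V^{\pi}_{\gamma}(s)$ is a rational, hence continuous and analytic, function on $[0,1)$ (being a coordinate of $(I-\gamma P^{\pi})^{-1}r^{\pi}$). First I would pin down a tie at $\gamma^{*}$. Since $\gamma^{*}$ is the minimal Blackwell threshold and $\beta$ is non-trivial, for every $\gamma<\gamma^{*}$ there is a $\gamma'\in[\gamma,\gamma^{*})$ at which $\beta$ violates the Bellman optimality equations, so taking $\gamma\uparrow\gamma^{*}$ produces violation points $\gamma'_{n}\uparrow\gamma^{*}$, each carrying a pair $(s,a)$ with $a\neq\beta(s)$ and $Q^{\beta}_{\gamma'_{n}}(s,a) > V^{\beta}_{\gamma'_{n}}(s)$. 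Finiteness of $\mathcal{S}\times\mathcal{A}$ lets some pair $(\tilde s,a)$ recur along a subsequence; continuity then gives $Q^{\beta}_{\gamma^{*}}(\tilde s,a)\geq V^{\beta}_{\gamma^{*}}(\tilde s)$, while optimality of $\beta$ at $\gamma^{*}$ forces the reverse, so $Q^{\beta}_{\gamma^{*}}(\tilde s,a)=V^{\beta}_{\gamma^{*}}(\tilde s)$ with $a\neq\beta(\tilde s)$. Analyticity of $\gamma\mapsto Q^{\beta}_{\gamma}(\tilde s,a)-V^{\beta}_{\gamma}(\tilde s)$ (isolated zeros, constant sign on each side of $\gamma^{*}$) upgrades the recurring strict inequality to a full left-neighborhood $(\gamma^{*}-\delta,\gamma^{*})$ on which $Q^{\beta}_{\gamma}(\tilde s,a) > V^{\beta}_{\gamma}(\tilde s)$.

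With $\tilde s$ and $a$ fixed, I would define $\tilde\pi$ to be $\beta$ modified to play $a$ at $\tilde s$, so that $\tilde\pi(\tilde s)\neq\beta(\tilde s)$, and read off the chain in order. The first inequality $V^{\beta}_{\gamma}(\tilde s) < V^{\tilde\pi}_{\gamma}(\tilde s)$ is the policy improvement theorem applied to the strict advantage $Q^{\beta}_{\gamma}(\tilde s,a) > V^{\beta}_{\gamma}(\tilde s)$. The second, $V^{\tilde\pi}_{\gamma}(\tilde s) < V^{\tilde\pi}_{\gamma^{*}}(\tilde s)$, is the strict monotonicity of the value in $\gamma$ under non-negative rewards already asserted in the paper, the strictness using that $\tilde\pi$ collects reward strictly in the future (guaranteed by the long-horizon structure). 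The third, $V^{\tilde\pi}_{\gamma^{*}}(\tilde s)\leq V^{\beta}_{\gamma^{*}}(\tilde s)$, is immediate from Blackwell optimality of $\beta$ at $\gamma^{*}$.

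For the limit I would observe that this same $\tilde\pi$ in fact \emph{ties} $\beta$ at $\gamma^{*}$: policy improvement at $\gamma^{*}$ gives $V^{\tilde\pi}_{\gamma^{*}}(\tilde s)\geq Q^{\beta}_{\gamma^{*}}(\tilde s,a)=V^{\beta}_{\gamma^{*}}(\tilde s)$, and Blackwell optimality gives the reverse, so $V^{\tilde\pi}_{\gamma^{*}}(\tilde s)=V^{\beta}_{\gamma^{*}}(\tilde s)$. Because $\tilde\pi$ differs from $\beta$ in action at $\tilde s$, it is admissible in the minimization defining $PG_{\beta_{\gamma}}(\tilde s)$, so $0 \leq PG_{\beta_{\gamma}}(\tilde s) \leq |V^{\beta}_{\gamma}(\tilde s)-V^{\tilde\pi}_{\gamma}(\tilde s)|$; continuity sends the right-hand side to $|V^{\beta}_{\gamma^{*}}(\tilde s)-V^{\tilde\pi}_{\gamma^{*}}(\tilde s)|=0$ as $\gamma\to\gamma^{*}$, and the squeeze yields $PG_{\beta_{\gamma}}(\tilde s)\to 0$.

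I expect the main obstacle to lie in Step~1: converting the bare minimality of $\gamma^{*}$ into a \emph{persistent} Bellman violation localized at one pivot pair $(\tilde s,a)$ with constant sign on a one-sided neighborhood of $\gamma^{*}$, and ensuring the tie is realized by a policy that genuinely differs in action (not merely in value) at $\tilde s$. This is exactly where finiteness of $\Pi$ and the rationality/analyticity of $V^{\pi}_{\gamma}$ are essential, since they rule out oscillating sign changes and pin the crossing to a single state-action pair. Obtaining the stated inequalities for \emph{every} $\gamma<\gamma^{*}$, rather than only on a left-neighborhood, additionally requires that $\beta$ never regain optimality below threshold; this holds under the single-crossing behavior of distracting long-horizon MDPs but, in full generality, must be secured by the same analyticity argument controlling any re-crossings of $V^{\beta}_{\gamma}-V^{\pi}_{\gamma}$ on $[0,\gamma^{*})$.
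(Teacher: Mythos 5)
Your proposal is correct and follows the same overall skeleton as the paper's proof --- strict domination of $\beta$ by a competitor below $\gamma^{*}$, monotonicity of $V^{\pi}_{\gamma}$ in $\gamma$ under non-negative rewards for the middle inequality, Blackwell optimality at $\gamma^{*}$ for the last, and a squeeze for the limit --- but you execute the two delicate sub-steps by genuinely different means, and in both cases your route is tighter. For the action difference, the paper takes an \emph{arbitrary} dominating policy $\tilde{\pi}_{\gamma}$ at each $\gamma<\gamma^{*}$ and forces $\tilde{\pi}(\tilde{s})\neq\beta(\tilde{s})$ a posteriori by unfolding the Bellman equation and arguing that if the policies agreed along the way, the two induced chains would be coupled and have equal value; you instead build $\tilde{\pi}$ ab initio as a one-action deviation at a Bellman-violation pair $(\tilde{s},a)$, extracted by pigeonhole over the finite $\mathcal{S}\times\mathcal{A}$ along $\gamma'_{n}\uparrow\gamma^{*}$ and stabilized by analyticity of the rational map $\gamma\mapsto Q^{\beta}_{\gamma}(\tilde{s},a)-V^{\beta}_{\gamma}(\tilde{s})$. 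This buys you something the paper's limit argument needs but never secures: a \emph{single fixed} pivot $\tilde{s}$ and competitor $\tilde{\pi}$ valid on a whole left-neighborhood of $\gamma^{*}$, whereas in the paper $\tilde{s}$ and $\tilde{\pi}_{\gamma}$ may a priori vary with $\gamma$, making $\lim_{\gamma\to\gamma^{*}}PG_{\beta_{\gamma}}(\tilde{s})$ ill-posed as written. For the limit itself, the paper squeezes $PG_{\beta_{\gamma}}(\tilde{s}) \leq V^{\tilde{\pi}_{\gamma}}_{\gamma}(\tilde{s})-V^{\beta}_{\gamma}(\tilde{s}) < V^{\beta}_{\gamma^{*}}(\tilde{s})-V^{\beta}_{\gamma}(\tilde{s})\to 0$ using only monotonicity and continuity of $V^{\beta}$ in $\gamma$; you instead establish the exact tie $V^{\tilde{\pi}}_{\gamma^{*}}(\tilde{s})=V^{\beta}_{\gamma^{*}}(\tilde{s})$ via policy improvement plus Blackwell optimality and squeeze against the continuous fixed difference $V^{\beta}_{\gamma}-V^{\tilde{\pi}}_{\gamma}$ --- slightly more machinery, same conclusion. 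Two caveats, which you either share with or correctly flag against the paper: strictness of $V^{\tilde{\pi}}_{\gamma}(\tilde{s})<V^{\tilde{\pi}}_{\gamma^{*}}(\tilde{s})$ requires $\tilde{\pi}$ to collect some reward strictly after the first step from $\tilde{s}$ (the paper asserts strict monotonicity without this qualification, so neither proof is airtight in the degenerate case); and your closing observation is well taken --- the theorem's claim for \emph{every} $\gamma<\gamma^{*}$, not just a left-neighborhood, tacitly assumes $\beta$ never regains optimality below threshold, an assumption the paper's opening sentence (``by definition of Blackwell optimality, $\exists\,\tilde{s},\tilde{\pi}_{\gamma}$ with $V^{\tilde{\pi}_{\gamma}}_{\gamma}(\tilde{s})>V^{\beta}_{\gamma}(\tilde{s})$'') makes silently, since minimality of $\gamma^{*}$ only rules out domination on full intervals $[\gamma',1)$, not at individual points of $[0,\gamma^{*})$. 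Note only that analyticity alone cannot \emph{exclude} such re-crossings in general MDPs (it merely makes them finitely many), so on the full range the statement needs either the paper's implicit assumption or a restriction to the distracting-LHP class.
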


\paragraph{}
Theorem 7 shows that for $\gamma$ values close to $\gamma^{*}$, there exists a pivot state such that the value of a Blackwell optimal policy at that state, when computed with $\gamma$, is arbitrarily close to the value of a different non-Blackwell optimal policy at the same state, when computed with $\gamma$. Intuitively, if the policy gap is arbitrarily close to zero, an RL algorithm is expected to have a greater difficulty evaluating the difference in value associated to such policies, and therefore have a greater difficulty in determining which is optimal. These results may suggest that without oracle knowledge of $\gamma^{*}$, an algorithm that attempts to search for $\gamma^{*}$ by increasing $\gamma$ iteratively would have increasing difficulty as $\gamma \to \gamma^{*}$.

\section{Experimental Results}
\paragraph{}
In this section we provide experimental results that further illustrate the phenomena discussed in previous sections. We investigate the difficulty of solving for Blackwell optimal policies in distracting long horizon MDPs, similar to those in Figure 1. For these experiments we use the MDP in Figure 2, with initial state $s_{d}$. We analytically solve for $\gamma^{*}$, and implement the delayed Q-learning PAC-MDP algorithm [16] for our experiments. We use $0.85 = \gamma_{2} > 0.84724541 =\gamma_{1} > \gamma^{*}$ in two sets of experiments, with $\gamma_{1} - \gamma^{*} < 10^{-10}$. For our experiments we use $\delta = 0.1$, and error tolerance $\epsilon_{1} = 0.05, \epsilon_{2} = 0.1$, where indices for $\epsilon, \gamma$ coincide for experiments. We run each set of experiments with a different random seed for 5 runs. The delayed Q-learning algorithms terminates when algorithm either finds itself in state $s_{d}$, greedily selecting $a_{1}$ and the \textit{Learn}(s,a) boolean flag is \textit{False}, or the algorithm finds itself in state $s_{H}$, greedily selecting $a_{2}$ and the \textit{Learn}(s,a) boolean flag is \textit{False}. Both situations indicate no further learning is possible, and the algorithm has converged on the $(\epsilon, \delta)$-optimal policy.

\paragraph{}
For the first set of experiments with $\gamma_{1} > \gamma^{*}$ and $\epsilon_{1} = 0.05$, the mean sample complexity required for convergence was $2.604679x10^{10} \pm 11219.2$. In each of the five experiments, the policy learned, $\hat{\pi}_{1}(s_{d}) = a_{1}, \hat{\pi}_{1}(s_{H}) = a_{2}$, which is not Blackwell optimal. Moreover, the algorithms terminates in state $s_{d}$, hence the policy is not even gain optimal. The policy gap at $s_{d}$ was also measured, and the mean policy gap, $\mu(PG(s_{d})) = 4.56x10^{-4} \pm 1.39x10^{-5}$. In the second set of experiments, using $\gamma_{2}, \epsilon_{2}$, the mean sample complexity across 5 runs was $1.00433x10^{11} \pm 7.223x10^{8}$. In each of the five experiments the policy learned was the Blackwell optimal policy, and the mean policy gap, $\mu(PG(s_{d})) = 2.5395x10^{-3} \pm 1.6865x10^{-5}$.

\paragraph{}
These results corroborate the theoretical results obtained. First, we see that for the experiments with $\gamma_{1}$, which is much closer to $\gamma^{*}$, we find that the policy gap at $s_{d}$ is much smaller than when compared to under $\gamma_{2} > \gamma_{1}$, as predicted by the theoretical results stated. More importantly, despite having oracle knowledge of $\gamma^{*}$ and selecting a Blackwell realizable $\gamma_{1}$, and implementing a PAC-MDP algorithm with commonly used values of $(\epsilon, \delta)$, no implementation returned a Blackwell optimal policy, and in fact did not even return a gain-optimal policy. These results further support the difficulty in arriving at Blackwell optimal policies for distracting long horizon MDPs. However, for $\gamma_{2}$ such that  $\gamma_{2} - \gamma^{*}  \approx 0.015275$, the Blackwell optimal policy was returned in all experiments. Though a positive result in some regards, it is also suggests that for distracting long horizon MDPs where $\gamma^{*} \to 1$, where the Lebesque measure $\lambda([\gamma^{*},1)) \to 0$, having the luxury of randomly selecting $\gamma \in [0,1)$ such that $gamma > \gamma^{*}$ becomes arbitrarily hard. Finally, these results corroborate the theoretical results showing the existence of a state where the policy gap approaches zero and that even with $\gamma^{*}$, with a commonly used $\epsilon$ error tolerance value, the $\epsilon$-optimal policy returned by a PAC-MDP algorithm was not even gain optimal.

\begin{figure}
\centering
\begin{tikzpicture}[scale=1.0, transform shape, node distance = 3.75cm]
        \tikzset{node style/.style={state, 
                                    fill=gray!20!white,
                                    circle}}
	 \node[node style]  (0) {$s_{d}$};
	\node[node style, right=of 0] (1) {$s_{H}$};

    \draw[>=latex,
          auto=left,
          every loop]
        
	(0) edge[loop above] node{$a_{1}: r_{d}$=0.1} (0)
	(0) edge[bend right, auto=left] node{$a_{2}: p = \frac{1}{500}$}(1)
	(0) edge[loop below] node{$a_{2}$: $p = \frac{499}{500}$}(0)
	(1) edge[bend right, auto=right] node{$a_{1}$}(0)
	(1)edge[loop above] node{$a_{2}: R_{max}$=1}(1);

\end{tikzpicture}
\caption{Distracting LHP. Actions are deterministic except for $a_{2}$ from state $s_{d}$. Only non-zero rewards are $r_{d}$ and $R_{max}$.} \label{fig:M2}
\end{figure}
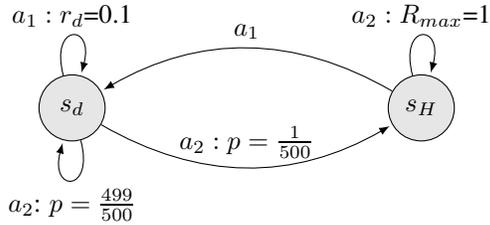

\section{Discussion and Related Work}
\paragraph{}

The topic of effects of $\gamma$ selection on policy quality has been of interest for several decades [17, 3, 13] with n-discount optimality and Blackwell optimality providing a global perspective on this relationship. These works recognize that for $\gamma$ discounting, an optimal policy may not be Blackwell optimal, and recognize that this problem is alleviated for $\gamma = 1$. However, there do not exist any known convergent algorithms with theoretical guarantees for the undiscounted setting. [3] also showed that for finite MDPs, as $\gamma \to 1$, the $\gamma$-discounted value function can be written as a Laurent series expansion, where each of the terms in this series is a scaled notion of optimality, with the first term being the gain, the second the bias, and so on. Using this construction, [13, 17] show there is both a sequence of nested equations for solving for the Laurent series coefficients, as well as a policy iteration method that is provably convergent for such a policy satisfying these equations for any finite term approximation of the Laurent series. More recently [11] utilized an exciting approach in function approximation by constructing value functions  using basis functions comprised of terms found within the Laurent series expansion.

\paragraph{}
[9] studies the relationship of $\gamma$ and reward functions with policy quality for goal based MDPs. They argue that $\forall \gamma<1$, an agent is not risk-averse, and prove that in the undiscounted setting and $r := -1$, $\forall (s,a,s')$, an agent is guaranteed to arrive at the goal state, however with $\gamma<1$ this is not so, as a shorter yet riskier path that may lead to non-goal absorbing state can have higher value than a longer, safer path to the goal. [12, 6, 7] are motivated by showing that using smaller $\gamma$ values may be advantageous. Besides having faster convergence rates, they argue smaller $\gamma$ values may also have better error. By decomposing the error or value difference between policies induced with different $\gamma$ values,  these decompositions have error terms dependant on the smaller $\gamma$ term, and another term that goes to zero as the two $\gamma$ values approach each other. These works argue that the best strategy is to find an intermediate $\gamma$ value that trades off the two terms. However, as is often the case in theoretical analysis of RL problems, the bounds are stated in terms of $V_{max}$, and for various values of $\gamma$, are vacuous as the bounds are higher than the absolute max error of $V_{max}$ (e.g. one policy only receiving zero rewards and another always receiving $R_{max}$). However, when the bounds are meaningful, without knowledge of the Blackwell optimal policy and associated value function, as shown in this study, even an $\epsilon$-optimal policy may not even be gain optimal.

\paragraph{}
For $\gamma \in [0,1)$ [10] define a hypothesis class, $\mathcal{H}_{\gamma}$ and show that 
\begin{align*}
\mathcal{H}_{\gamma} = \{\textbf{v} \in \mathbb{R}^{\mathcal{S}} : \vert \vert s \vert \vert_{\infty} \leq \frac{R_{max}}{1-\gamma}  \}.
\end{align*}
Under this framework, for a family of hypothesis classes $\mathcal{H}_{\gamma}$, indexed by $\gamma \in [0,1)$, we see that as $\gamma$ increases, $\{\mathcal{H}_{\gamma}\}_{\gamma}$ is a monotonically increasing sequence of hypothesis classes. [10, 6]  formalize that this also corresponds to an increase in measure of complexity, via the generalized Rademacher complexity,  $\mathscr{R}(\mathcal{H}_{\gamma})$, depends only on $V_{max}$. That is, 
\begin{align*}
\mathscr{R}(\mathcal{H}_{\gamma}) = \frac{R_{max}}{2(1-\gamma)}.
\end{align*}
As examined in [10], long horizon MDPs suffer in that $\gamma^{*}$ may be arbitrarily close to 1, which implies the complexity of realizable hypothesis classes for LHPs grows non-linearly with the horizon size (and $\gamma^{*}$). [6] argue that using $\gamma_{learn} < \gamma^{*}$ is therefore a mechanism akin to regularization, by selecting for a lower complexity hypothesis class one can prevent overfitting. Their results suggest using smaller $\gamma$ earlier in learning; however as discussed here, the quality of $\gamma$ being \textit{small} or \textit{large} is problem dependant, and without oracle knowledge of the problem is meaningless. Though [6] does not consider Blackwell optimality, an interesting result [Theorem 2] can easily be adapted here which shows that for $\gamma_{learn} < \gamma^{*}$ the loss as measured by $\vert \vert . \vert \vert_{\infty}$ for an approximately optimal policy $\hat{\pi}_{\gamma}$ using $\gamma$ and $n$ samples from each $(s,a) \in \mathcal{S}\times\mathcal{A}$ follows with probabiliy $> 1 - \delta$:

\begin{align*}
\vert \vert V^{\pi_{\gamma^{*}}^{*}} - \hat{V}^{\hat{\pi}_{\gamma}} \vert \vert_{\infty} &\leq \frac{\gamma^{*} - \gamma}{(1-\gamma^{*})(1-\gamma)}R_{max}\\
 &+ \frac{2R_{max}}{(1-\gamma)^{2}} \sqrt{\frac{1}{2n}log\big(\frac{2\vert\mathcal{S}\vert \vert\mathcal{A}\vert \vert \Pi_{\gamma} \vert}{\delta} \big)}.
\end{align*}

[6] argue that the tradoff between the two terms involves controlling the complexity of the policy class using a smaller $\gamma$, versus the error induced in the first time when using a smaller $\gamma$. Our results show that even as $n \to \infty$ and the second error term goes to zero, the first error term is fixed for any fixed $\gamma < \gamma^{*}$, and that without strong domain knowledge, even an $\epsilon$-optimal approximate policy may not even be gain optimal. [15] recently suggested $\gamma$-nets, a function approximation architecture that trains using a set of discount factors to learn value functions with respect to several timescales. The idea being that the approximation architecture can generalize and approximate the value of a state for any $\gamma$ if sufficiently trained.

\paragraph{}
The work presented here suggests that without considering Blackwell optimality and related concepts, theoretical bounds on value functions in RL may not provide meaningful and interpretable semantics with respect to the optimality of the resulting policy. An apt metaphor is that for a daredevil jumping across a canyon, coming $\epsilon$-close to being successful is arbitrarily bad. In that vein, our results show that for LHPs an $\epsilon$-Blackwell optimal policy may not even be gain optimal. In contrast, in the supervised learning setting, one may search over a particular hypothesis class and and arrive at some locally or globally optimal hypothesis $\hat{h}: \mathcal{X} \to \mathcal{Y}$, which obtains empirical accuracy of $p_{train}, p_{test}  \in [0,1]$ on the training and test datasets, respectively. Once a classifier is obtained, though one may not know that the Bayes optimal classifier risk may be, one does know that it can, at most, achieve 100\% accuracy, and hence in absolute terms, one can obtain meaning from the test and training accuracy of a classifier returned by some SL algorithm. However, the RL setting is not similar in these regards. Without oracle knowledge of the RL problem, the policy and value function returned by an RL algorithm, parameterized by $\gamma$, and any other parameters $\theta$, it is hard to say just how optimal such a policy, in fact, \textit{is}, thereby leaving a researcher in the same boat as the fictitious RL \textit{agent}: with results that are evaluative not instructive. 

\paragraph{}
Given that $\gamma$ discounting has such a strong effect on the induced hypothesis class, one may ask why discounting \textit{is even used}? Authors often cite concepts from utility theory such as inflation and interest to motivate the use of discounting. Such concepts for temporal valuation may be useful for agents, such as humans, with finite time horizons, however such intuitions may not necessarily be commensurable for infinite horizon agents. The use of $\gamma$ discounting in economic models is also of contention [18]. For economic and environmental policies, how should we discount the value of having a clean environment? Is discounting the future ethical in such settings? Might discounting the future lead us to an arbitrarily bad absorbing state? Utility theory has considered several qualities two utilitiy streams, $\{r_{t}\}_{t\geq 1}, \{r_{t}'\}_{t\geq1}$, may posses in forming binary relations used as orderings on value functions (utility streams) [8], including that of anonymity which essentially states that two utility streams are equal under an ordering if they are permutations of one another. Hence, anonymity can only be realized in the RL setting if $\gamma = 1$. These works introduce and argue for the use of Blackwell optimality in economics research. [14] answers the question why discounting is used: because it turns an infinite sum into a finite one. That is, it allows us to consider convergent series and therefore algorithms. It then follows that we are not selecting for $\pi^{*} \in \Pi$, but rather for $\pi^{*} \in \Pi \cap \{$policies that are representable by convergent algorithms$\}$. If RL algorithms are to be used and incorporated in real world processes and products, we raise the rhetorical question: \textit{What are the moral and ethical implications of purposefully running a sub-optimal infinite horizon algorithm, in perpetuity?}

\paragraph{}
The results provided in this paper suggest that iterative methods at arriving at $\gamma^{*}$ are problematic, suggesting a need for analytical methods of computing $\gamma^{*}$. However, even with $\gamma^{*}$, an approximately optimal policy may not even produce a gain optimal policy. For LHPs, as $\gamma^{*} \to 1$, since even using $\gamma > \gamma^{*}$ shares this unfortunate result, as demonstrated empirically in our experiments, what can be done to ensure solving for the Blackwell optimal policy? Recent advances in PAC-MDP algorithms [4] introduce PAC uniform learning, PAC algorithms that are $\epsilon$-optimal $\forall \epsilon$ simultaneously. Such algorithms must never explore then commit [4] , but rather must never stop learning, as it has been shown that such approaches are necessarily sub-optimal [5]. An interesting direction would be to consider the use of such algorithms for arriving at Blackwell optimal policies. 

\paragraph{}
Though Blackwell optimality is an ideal, for non-trivial LHPs it is possible that Blackwell optimal policies are hard to discern from policies that may not even be gain optimal. With such results being so dire, we suggest three main areas of focus for future research within the RL community. 1) Development of convergent algorithms for solving for n-discount optimal policies, with theoretical bounds, and efficient solution methods for arriving at the Laurent series expansion of a $\gamma$-discounted value function as $\gamma \to 1$; 2) Analytical solutions to $\gamma^{*}$; 3) Human preference and goal based RL.

\paragraph{}
Our main focus is on the third area of focus mentioned above. For any applied RL solution, for example a commercial product that relies on RL, we argue that ultimately the quality of a policy is judged by human preferences. Those implementing an RL solution method will receive a policy and a value function, and must evaluate if it is a sufficient solution to the given problem, or not. If not, the researcher will experiment with other parameters, including $\gamma$, and repeat until a policy is found that is sufficient. We call such an aproach based on human preference, and may be separate from the value function itself, and solely dependent on the behaviour of the policy. This can be seen by the works and discussions made recently [1] based on results on the CoastRunners domain. CoastRunners is a video game where the policy controls a boat in a racing game. The policy solved for by OpenAI resulted in the boat driving in circles, collecting rewards, rather than racing to the finish line and completing the race. Though OpenAI uses this as an example of a pathological behaviour induced by a faulty reward function, it can viewed as the induced behaviour by using a myopic $\gamma$ in a distracting LHP. OpenAI, and most others would agree, that the behaviour observed was \textit{pathological}, however, what makes it pathological? In fact, it was the optimal policy solved for, given the encoding of the MDP. We argue that what makes this pathological is simply that the policy \textit{didn't do what the researchers wanted it to do}, which was to win the race. For this reason, at this current state in RL research, we claim the ultimately, the quality of policies solved for are measured by their being deemed sufficient, as subjectively defined by the researcher. We claim that this is equivalent to the researcher ultimately desiring \textit{something} from the solved policy, and hence if this can be encoded as an indicator function, then goal based RL problems should be used, being some of the simplest classes of MDP problems.

\subsubsection*{Acknowledgments.}
The authors would like to thank Maia Fraser for discussions and thoughtful edits of prior versions of this manuscript.

\section{References}

[1] Openai
blog.
https://blog.openai.com/faulty-
reward-functions/

[2] Bellemare, M., Ostrovski, G., Guez, A., Thomas,
P., Munos, R.: Increasing the action gap: New op-
erators for reinforcement learning. In: AAAI. pp.
1476–1483 (2016)

[3] Blackwell, D.: Discrete dynammic programming.
Annals of Mathematical Stastics 33, 719–726
(1962)

[4] Dann, C., Lattimore, T., Brunskill, E.: Unifying pac
and regret: Uniform pac bounds for episodic re-
inforcement learning. In: Neural Information Pro-
cessing Systems (2016)

[5] Garivier, A., Kaufmann, E.: On explore-then-
commit strategies. In: Neural Information Process-
ing Systems (2017)

[6] Jiang, N., Kulesza, A., Singh, S., Lewis, R.: The
dependence of effective planning horizon on model
accuracy. In: AAMAS. vol. 14 (2015)

[7] Jiang, N., Singh, S., Tewari, A.: On structural prop-
erties of mdps that bound loss due to shallow plan-
ning. In: IJCAI (2016)

[8] Jonsson, A., Voorneveld, M.: The limit of dis-
counted utilitarianism. Theoretical Economics 13,
19–37 (2018)

[9] Koenig, S., Liu, Y.: The interaction of representa-
tions and planning objectives for decision-theoretic
planning tasks. Journal of Experimental and Theo-
retical Artificial Intelligence 14, 303–326 (2002)

[10] Lehnert, L., Laroche, R., van Seijen, H.: On value
function representation of long horizon problems.
In: 32nd AAAI Conference on Artificial Intelli-
gence. pp. 3457–3465 (2018)

[11] Mahadevan, S., Liu, B.: Basis construction from
power series expansions of value functions. In:
Lafferty, J.D., Williams, C.K.I., Shawe-Taylor, J.,
Zemel, R.S., Culotta, A. (eds.) Advances in Neu-
ral Information Processing Systems 23, pp. 1540–
1548. Curran Associates, Inc. (2010)

[12] Petrik, M., Scherrer, B.: Biasing approximate dy-
namic programming with a lower discount factor.
In: In Advances in Neural Information Processing
Systems. pp. 1265–1272 (2009)

[13] Puterman, M.: Markov decision processes: Dis-
crete stochastic dynammic programming. John Wi-
ley and sons, Inc. (1994)

[14] Schwartz, A.: A reinforcement learning method
for maximizing undiscounted rewards. In: ICML
(1993)

[15] Sherstan, C., MacGlashan, J., Pilarski, P.: Gener-
alizing value estimation over timescale. In: Pre-
diction and Generative Modeling in Reinforcement
Learning Workshop, FAIM (2018)

[16] Strehl, A., Li, L., Wiewiora, E., Langford, J.,
Littman, M.: Pac model-free reinforcement learn-
ing. In: ICML. pp. 881–888 (2006)

[17] Veinott, A.: Discrete dynammic programming with
sensitive discount optimality criteria. Annals of
Mathematical Stastics 40, 1635–166 (1969)

[18] Weitzman, M.: Gamma discounting. American
Economic Review 91, 260–271 (2001)


\section{Appendix}

\paragraph{A Comment on Bounds Related to the Maximal Action Gap:}
[10] define $\mathcal{S}_{C} \subseteq \mathcal{S}$ as a \textit{fully connected} subset of the state space. Despite the use of the term \textit{fully connected} which was intended to describe a subet of the state space that is reachable from anywhere within that subset, a more appropriate term is communicating, as fully connected has connotations that $\forall s, s' \in \mathcal{S}_{C}$, $\exists$ $a \in \mathcal{A}$ such that $p(s' \vert s, a)>0$. For this reason we will use the term communicating to describe $\mathcal{S}_{C}$. From this they define $V_{max,\gamma} = \underset{s \in \mathcal{S}_{C}} {\mathrm{max}}$ $V^{\pi_{\gamma}^{*}}(s)$. Lemma 2 of [10] states: MAG$(\mathcal{S}_{C}) \leq (1-\gamma^{D_{S_{C}}+1})V_{max,\gamma}$, where $D_{S_{C}}$ is the diameter of $\mathcal{S}_{C}$. From this, it is stated that if $V_{max,\gamma}$ is bounded as $\gamma \to 1$, then $\gamma \to 1$ implies that MAG $\to 0$. Though this implication is true, we show that $V_{max, \gamma}$ is bounded as $\gamma \to 1$ if and only if under all policies the expected number of times a non-zero reward is obtained under $\pi_{\gamma}^{*}$ is finite. This means that under $\textit{all}$ policies, $\textit{all}$ non-zero rewards are transient. Hence, such a result applies to a rather vacuous subset of MDPs.

\begin{proposition}Let $\mathcal{M} = \langle \mathcal{S}, \mathcal{A}, p, R, \gamma \rangle$ be an MDP such that $\vert \mathcal{S} \vert < \infty$ and $\vert \mathcal{A} \vert < \infty$. Then $\exists$ $M < \infty \ni \lim_{\gamma \to 1} V_{max,\gamma} \leq M \iff \forall \pi$ $\mathbb{E}_{\pi_{\gamma}}\big\{\sum_{t=1}^{\infty} \mathbbm{1}r_{t} \neq 0  \big\} < \infty$.
\end{proposition}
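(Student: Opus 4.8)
The plan is to reduce the whole statement to a single-policy fact via monotone convergence, and then to handle the optimization over states and policies using finiteness of the stationary-deterministic policy class and of the set of attainable reward values.

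First I would record the two structural facts that carry the argument. Since rewards are nonnegative, for any fixed stationary policy $\pi$ and state $s$ the map $\gamma \mapsto V^{\pi}_{\gamma}(s)$ is nondecreasing, and by monotone convergence $\lim_{\gamma \to 1} V^{\pi}_{\gamma}(s) = \mathbb{E}_{\pi}\{\sum_{t=1}^{\infty} r_{t} \mid s_{0} = s\} =: G^{\pi}(s) \in [0,\infty]$. Second, because $\mathcal{S}$ and $\mathcal{A}$ are finite, $R$ takes only finitely many values, so the smallest positive reward $r_{min} := \min\{r(s,a,s') : r(s,a,s') > 0\}$ is strictly positive (if no positive reward exists the claim is trivial). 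Writing $N^{\pi}(s) := \mathbb{E}_{\pi}\{\sum_{t} \mathbbm{1}[r_{t} \neq 0] \mid s_{0} = s\}$, the sandwich $r_{min}\, N^{\pi}(s) \le G^{\pi}(s) \le R_{max}\, N^{\pi}(s)$ yields $G^{\pi}(s) < \infty \iff N^{\pi}(s) < \infty$. This is the crux of the proof: it converts ``the expected reward mass is finite'' into ``the expected number of reward events is finite,'' and it is exactly where finiteness of $\mathcal{S}$ and $\mathcal{A}$ enters.

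For the ($\Leftarrow$) direction I would use that for each $\gamma < 1$ the optimal value is attained by a stationary deterministic policy, so $V_{max,\gamma} = \max_{s \in \mathcal{S}_{C}} \max_{\pi \in \Pi^{\mathrm{SD}}} V^{\pi}_{\gamma}(s)$ with $\Pi^{\mathrm{SD}}$ finite. By hypothesis and the sandwich, each limit $G^{\pi}(s)$ is finite, and each $V^{\pi}_{\gamma}(s)$ increases to it; since a maximum over a finite index set commutes with the limit, $\lim_{\gamma \to 1} V_{max,\gamma} = \max_{s \in \mathcal{S}_{C}} \max_{\pi \in \Pi^{\mathrm{SD}}} G^{\pi}(s) =: M < \infty$. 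For the ($\Rightarrow$) direction I would argue the contrapositive: if some stationary deterministic $\pi_{0}$ collects infinitely many rewards in expectation, then its induced finite Markov chain has a recurrent state $j$ from which a positive-reward transition is taken, so any policy that reaches $j$ and thereafter follows $\pi_{0}$ collects that reward infinitely often in expectation. Reaching $j$ from some $s \in \mathcal{S}_{C}$ in finite expected time yields a (stationary) policy with $N(s) = \infty$, hence $G(s) = \infty$ and $\lim_{\gamma \to 1} V_{\gamma}(s) = \infty$; since the optimal value dominates any policy's value at $s \in \mathcal{S}_{C}$, we get $V_{max,\gamma} \to \infty$, contradicting the bound.

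The main obstacle is this last direction, and specifically the reachability bookkeeping. I must ensure that a reward-recurrence occurring under an arbitrary policy can genuinely be exploited from inside the set $\mathcal{S}_{C}$ over which $V_{max,\gamma}$ is defined, and that the navigate-then-exploit construction can be realized as a single stationary policy (follow $\pi_{0}$ on the closed recurrent class of $j$, and use hitting-time-minimizing actions elsewhere) so that the optimal, stationary-deterministic value really dominates it. The monotone-convergence and finite-maximum interchanges are routine; the care lies entirely in matching ``transient under all policies'' to ``the state-restricted optimal value stays bounded.'' This is where the communicating structure of $\mathcal{S}_{C}$ (equivalently, the finite-diameter assumption of the main paper, under which every state is reachable from every state) is indispensable, and I would state the proposition in that setting to close the reachability gap cleanly.
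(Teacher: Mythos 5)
Your proof is correct but takes a genuinely different route from the paper's. The paper argues through the structure of the induced Markov chains: it shows that boundedness of $V_{max,\gamma}$ forces $\mathcal{S}_{C} \subsetneq \mathcal{S}$, that $\mathcal{S}_{C}$ must be transient under $\pi_{\gamma}^{*}$, and that the resulting absorbing class $S_{A}$ carries no positive reward, each step by exhibiting a lower bound of the form $\frac{r\gamma^{D_{S_{C}}}}{1-\gamma^{D_{S_{C}}}}$ that would blow up as $\gamma \to 1$; the reverse direction is then bounded via the maximal expected hitting time $T$ of $S_{A}$. You instead reduce everything to the single-policy identity $\lim_{\gamma \to 1} V^{\pi}_{\gamma}(s) = G^{\pi}(s)$ via monotone convergence together with the sandwich $r_{min}\,N^{\pi}(s) \leq G^{\pi}(s) \leq R_{max}\,N^{\pi}(s)$, and then exchange the limit with a maximum over the finite stationary-deterministic policy class. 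Your route buys two things: the sandwich makes the equivalence between ``finite expected reward mass'' and ``finitely many expected reward events'' exact, and your $(\Leftarrow)$ direction repairs a genuine rigor slip in the paper, whose chain $V^{\pi_{\gamma}^{*}}(s) \leq \sum_{t=1}^{T} R_{max}\gamma^{t-1} \leq TR_{max}$ treats the \emph{expected} hitting time $T$ as if it were an almost-sure horizon (rewards can occur after time $T$; the correct statement is the Wald-type bound $V \leq R_{max}\,\mathbb{E}\{\tau\}$, which is precisely your $R_{max}N$ bound). What the paper's route buys is the structural byproducts (transience of $\mathcal{S}_{C}$, rewardless absorbing sets) that it reuses in the surrounding discussion. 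Both arguments share the same reachability caveat: $V_{max,\gamma}$ is a maximum over $s \in \mathcal{S}_{C}$ only, so the $(\Rightarrow)$ direction requires that reward-collecting behavior elsewhere be exploitable from within $\mathcal{S}_{C}$ --- the paper's step ``due to the optimality of $\pi_{\gamma}^{*}$, this must be true for any policy'' silently assumes this, whereas you flag it explicitly and close it by adopting the communicating setting, which is the honest fix. One simplification available to you: the stationary navigate-then-exploit construction in your contrapositive is not needed for dominance, since $V^{\pi_{0}}_{\gamma}(s) \uparrow G^{\pi_{0}}(s) = \infty$ already holds by monotone convergence even for nonstationary $\pi_{0}$, and the discounted optimal value dominates all policies; reachability of $s$ from $\mathcal{S}_{C}$ is the only point that genuinely needs the communication assumption.
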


\begin{proof}Suppose $\exists$ $M < \infty$ such that $V_{max,\gamma} \leq M$ as $\lim_{\gamma \to 1}$. WLOG, since $R \in [0, R_{max}]$, we may assume $R_{max} > 0$, since otherwise this statement is trivial. Clearly, $\mathcal{S} \neq \mathcal{S}_{C}$, since otherwise, as there exists at least one transition that induces a non-zero reward, $r > 0$, then at worst a policy may traverse the entire diameter of $\mathcal{S}_{C}$ to receive a reward $r$ and do so for perpetuity. That is,\\

\begin{align*}
V_{max,\gamma} &= \underset{s \in \mathcal{S}_{C}} {\mathrm{max}}\ V^{\pi_{\gamma}^{*}}(s)\\
&\geq r\gamma^{D_{S_{C}}} + r\gamma^{2D_{S_{C}}} + ...\\
&= \sum_{t=1}^{\infty}r\gamma^{tD_{S_{C}}}\\
&= r\gamma^{D_{S_{C}}}\sum_{t=0}^{\infty}\gamma^{tD_{S_{C}}}\\
&= \frac{r\gamma^{D_{S_{C}}}}{1-\gamma^{D_{S_{C}}}}
\end{align*}

But clearly,

\begin{align*}
M \geq \frac{r\gamma^{D_{S_{C}}}}{1-\gamma^{D_{S_{C}}}} &\iff \frac{M}{r} -\frac{M\gamma^{D_{S_{C}}}}{r} \geq \gamma^{D_{S_{C}}}\\
&\iff \frac{M}{r} \geq \gamma^{D_{S_{C}}}(1+\frac{M}{r})\\
&\iff \frac{M}{r+M} \geq \gamma^{D_{S_{C}}}\\
&\iff \big(\frac{M}{r+M}\big)^{\frac{1}{D_{S_{C}}}} \geq \gamma
\end{align*}

So for $\gamma >\big(\frac{M}{r+M}\big)^{\frac{1}{D_{S_{C}}}}$ we have $M < V_{max,\gamma}$. This shows that $\mathcal{S} \neq \mathcal{S}_{C}$. Hence, for $\mathcal{S}_{C} \subsetneq \mathcal{S}$, it must be that $\mathcal{S}_{C}$ is transient under $\pi_{\gamma}^{*}$, since otherwise if $\exists$ $T \in \mathbb{N}$ such that $\forall t \geq T$, $s_{t} \in \mathcal{S}_{C}$, then again by the same argument above, $\exists$ $\gamma \in [0,1)$ such that $\forall \gamma' \geq \gamma$, $V_{max,\gamma'} > M$. Hence $\mathcal{S}_{C}$ must be transient under $\pi_{\gamma}^{*}$. Now, since $\vert \mathcal{S} \vert < \infty$, then $\vert \mathcal{S} \setminus \mathcal{S}_{C}\vert < \infty$. Hence for $\pi_{\gamma}^{*}$, $\exists$ $S_{A} \subseteq  \mathcal{S} \setminus \mathcal{S}_{C}$ such that $S_{A}$ is irreducible and positive recurrent (e.g. absorbing). We claim that there must not be any possible non-zero rewards within $S_{A}$. Let $T = \underset{s \in \mathcal{S}} {\mathrm{max}} \mathbb{E}_{\pi_{\gamma}^{*}}\big\{\tau(s,S_{A}) \big\}$ be the maximum expected first hitting time of reaching the absorbing subset of the state space $S_{A}$ under $\pi_{\gamma}^{*}$. By a similar argument as above, there cannot be any positive rewards in $S_{A}$, since otherwise $\exists$ $s_{A} \in S_{A}$ such that $V^{\pi_{\gamma}^{*}}(s_{A}) \to \infty$ as $\gamma \to 1$. If this is true, then $\exists$ $s' \in \mathcal{S}_{C}$ such that $V_{max, \gamma} = \underset{s \in \mathcal{S}_{C}} {\mathrm{max}}\ V^{\pi_{\gamma}^{*}}(s) \geq \gamma^{T}V^{\pi_{\gamma}^{*}}(s_{A})$, and therefore $V_{max,\gamma} \to \infty$ as $\gamma \to 1$.

Hence, as $\gamma \to 1$, $\pi_{\gamma}^{*}$ obtains non-zero rewards for only a finite number of time steps. Due to the optimality of $\pi_{\gamma}^{*}$, then this must be true for any policy $\pi_{\gamma}$. Hence it must be that all rewards in $\mathcal{M}$ are transient.\\

For the reverse implication, suppose that $\forall \pi$ $\mathbb{E}_{\pi_{\gamma}}\big\{\sum_{t=1}^{\infty} \mathbbm{1}r_{t} \neq 0  \big\} < \infty$. Let $T$ be defined as above, as the maximum expected hitting time of the absorbing subset $S_{A}$ which contains no non-zero rewards. $S_{A}$ must exist, by a similar argument as above. Then we have, $\forall s \in \mathcal{S}$
\begin{align*}
V^{\pi_{\gamma}^{*}}(s) &\leq \sum_{t=1}^{T}R_{max}\gamma^{t-1}\\
&\leq TR_{max} < \infty.
\end{align*}
Hence $V_{max,\gamma}$ is bounded as $\gamma \to 1$.
\end{proof}

\paragraph{}The maximum action gap bounds collapse to zero for an infinite horizon problem, as $\gamma \to 1$, but only for environments where $\textbf{all}$ the rewards are transient. [10] argue that representing the value function for such class of MDPs is quite difficult as $\gamma \to 1$, however such a class of environments are best solved using episodic MDP approaches, with $\gamma = 1$. Since for any policy the number of time steps where a positive reward is possible is finite, then finding an optimal policy is only relevant for the first $T < \infty$ time steps, since afterwards the behaviour becomes irrelevant. [13] shows such domains can be converted to undiscounted episodic tasks. In doing so, the hypothesis space is completely different, as only value functions $V \in [0, R_{max}T]^{\mathcal{S}}$ need be considered, which have no dependancy on $\gamma$, hence the Radamacher complexity results stated previously do not apply here. A multi-step learning approach of first learning $T$, then applying an episodic RL algorithmic approach is ideal for such environments.

\paragraph{Proof of Lemma 1}
\begin{proof}
Let $\beta$ be a Blackwell optimal policy with associated $\gamma^{*}$. Note that $\gamma^{'} = \gamma^{*}$ follows from the hypothesis and definition of $\gamma^{'}$ for Blackwell regret. Then,
\begin{align*}
R_{\mathcal{B}}(\pi_{\gamma}) &= \mathbb{E}\big\{V^{\beta}_{\gamma^{*}}(s) - V^{\pi_{\gamma}}_{\gamma^{*}}(s) \big\}\\
&= \mathbb{E}\big\{V^{\beta}_{\gamma^{*}}(s) - V^{*}_{\gamma^{*}}(s) + V^{*}_{\gamma^{*}}(s) - V^{\pi_{\gamma}}_{\gamma^{*}}(s) \big\}\\
&= \mathbb{E}\big\{V^{\beta}_{\gamma^{*}}(s) - V^{*}_{\gamma^{*}}(s)\big\} + \mathbb{E}\big\{V^{*}_{\gamma^{*}}(s) - V^{\pi_{\gamma}}_{\gamma^{*}}(s) \big\}\\
&= 0 + \mathbb{E}\big\{V^{*}_{\gamma^{*}}(s) - V^{\pi_{\gamma}}_{\gamma^{*}}(s) \big\}\\
&= \mathcal{R}(\pi; \gamma^{*})
\end{align*}
\end{proof}

\paragraph{Proof of Proposition 2}
\begin{proof} Let $\pi$ be a Blackwell optimal policy, then it is bias optimal which clearly must minimize the expected hitting time of $s_{H}$. For the reverse implication, let $\pi$ be the policy that minimizes the expected hitting time of $s_{H}$. Let $\gamma = \sqrt[^D]{\frac{r_{d}}{R_{max}}}$, with $D, r_{d}, R_{max}$ defined in the text. Then it follows that $\gamma^{*} = \gamma$, since otherwise $\forall \gamma' < \gamma, \exists \pi'$ such that $V^{\pi'}_{\gamma'}(s_{d}) > V^{\pi}_{\gamma'}(s_{d})$. It clearly follows that under any policy $\mu$, $\forall \gamma \geq \gamma^{*}$, $V^{\pi}_{\gamma} \geq V^{\mu}_{\gamma}$, and therefore $\pi$ is a Blackwell optimal policy.
\end{proof}

\paragraph{Proof of Corollary 3}
\begin{proof}Let $\mathcal{M}$ be a distracting MDP as described above, with $D, r_{d}, R_{max}$ known to the algorithm. Let $\pi^{*}$ be the Blackwell optimal policy learned and evaluated with $\gamma^{*}$. By the previous Proposition, then $\pi^{*}$ is the policy that takes the shortest path from any state to $s_{H}$, and as given in the proof of said Proposition, $\gamma^{*} = \sqrt[^D]{\frac{r_{d}}{R_{max}}}$. Moreover, from Proposition 2 $\forall \gamma < \gamma^{*}$, $\exists$ $\pi \neq \pi^{*} \ni V^{\pi_{\gamma}}_{\gamma}(s_{d}) < V^{\pi_{\gamma^{*}}^{*}}_{\gamma^{*}}(s_{d}).$ This follows for any policy that does not minimize the expected first hitting time of $s_{H}$. Hence $\forall \gamma \geq \gamma^{*} = \sqrt[^D]{\frac{r_{d}}{R_{max}}}$. Hence, with knowledge of $D, r_{d}, R_{max}$, $\gamma^{*}$ can be computed and therefore a realizable discount factor may be selected.
\end{proof}

\paragraph{Proof of Corollary 4}
\begin{proof}
First, given $r_{d}, R_{max}$, and let $\gamma \in [0,1)$. For $s_{d}, s_{H}$ as defined above, it suffices to show as in the previous proposition $\exists D>0 \ni$, $\forall \gamma' > \gamma$, for the induced optimal policies $\pi_{\gamma}^{*}, \pi_{\gamma'}^{*}$, and the Blackwell optimal policy $\beta$,
\begin{align*}
V^{\pi_{\gamma}^{*}}_{\gamma}(s_{d}) = \frac{r_{d}}{1-\gamma} &> \frac{R_{max}\gamma^{D}}{1-\gamma} = V^{\beta_{\gamma}}_{\gamma}(s_{d})\ but\\
V^{\pi_{\gamma'}^{*}}_{\gamma'}(s_{d}) = \frac{r_{d}}{1-\gamma'} &\leq \frac{R_{max}\gamma'^{D}}{1-\gamma'} = V^{\beta_{\gamma'}}_{\gamma'}(s_{d})
\end{align*}

Hence, it suffices to show $\exists D, \gamma' \ni$:
\begin{align*}
V^{\pi_{\gamma}^{*}}_{\gamma}(s_{d}) &= \frac{r_{d}}{1-\gamma}\\
&< \frac{R_{max}\gamma^{D}}{1-\gamma} = V^{\beta_{\gamma}}_{\gamma}(s_{d})\\
 &\leq \frac{R_{max}\gamma'^{D}}{1-\gamma'} = V^{\beta_{\gamma'}}_{\gamma'}(s_{d})
\end{align*}

Let $D$ = sup $\{D' \vert D' <\ log(\frac{r_{d}}{R_{max}}) - log(\gamma)\}$, and set $\gamma' := \sqrt[D]{\frac{r_{d}}{R_{max}}}$. Then $D, \gamma'$ satisfy the claim, and with initial state distribution being a point mass at $s_{d}$, we have $\pi_{\gamma}^{*}$ is not gain optimal, as $\rho^{\pi_{\gamma}^{*}} = r_{d}$, but $\forall \tilde{\gamma} > \gamma$, it follows that $\pi_{\tilde{\gamma}}^{*}$ is Blackwell optimal.

Without loss of generality, the same proof technique can be applied when either $r_{d}, D$ are known, and $\gamma \in [0,1)$ is fixed, as well as if $R_{max}, D$ are known, and $\gamma \in [0,1)$ is fixed.
\end{proof}

\paragraph{Proof of Corollary 5}
\begin{proof}This follows as a Corollary from Theorem 6, and Proposition 2, since $\exists$ a pivot state $\tilde{s}$ where the policy gap vanishes. It is easy to see that under Proposition 2 and the previous two Corollaries that followed, $s_{d}$ is a pivot state. Let $\tilde{\pi}$ equal the Blackwell optimal policy, $\beta$, at every state except, $\tilde{\pi}(s_{d}) = a_{stay}$, noting that $r(s_{d}, a_{stay}, s_{d}) = r_{d}$. Then $\tilde{\pi}$ is not gain optimal as $\rho^{\tilde{\pi}} = r_{d} < R_{max} = \rho^{\beta}$, yet $\forall s \in \mathcal{S}\setminus \{s_{d}\}$, $V^{\beta}_{\gamma^{*}}(s) = V^{\tilde{\pi}_{\gamma^{*}}}_{\gamma^{*}}(s)$, and for $s_{d}$, we see that $V^{\tilde{\pi}}_{\gamma^{*}}(s_{d}) = \frac{r_{d}}{1-\gamma^{*}}$, while $V^{\beta}_{\gamma^{*}}(s_{d}) =  \frac{\gamma^{*^{D}}R_{max}}{1-\gamma^{*}}$. Then $r_{d}, D, R_{max}$ can be set such that $\forall \epsilon > 0$, $\vert \vert V^{\beta}_{\gamma^{*}} - V^{\tilde{\pi}_{\gamma^{*}}}_{\gamma^{*}} \vert \vert_{\infty} < \epsilon$.
\end{proof}

\paragraph{Proof of Theorem 6}
\begin{proof}
Let $\gamma < \gamma^{*}$. By definition of Blackwell optimality, then $\exists \tilde{s} \in \mathcal{S}$, $\exists \tilde{\pi}_{\gamma}$ such that $V^{\tilde{\pi}_{\gamma}}_{\gamma}(\tilde{s}) > V^{\beta}_{\gamma}(\tilde{s})$. Moreover, since all rewards are non-negative, $\forall \pi$, $s \in \mathcal{S}$, $\forall \gamma_{1} < \gamma_{2}$ it follows that $V^{\pi}_{\gamma_{1}}(s) < V^{\pi}_{\gamma_{2}}(s)$. That is, increasing $\gamma$ while keeping the policy constant can only increase the magnitude of the value function. Hence we have as well,
\begin{align*}
&V^{\tilde{\pi}_{\gamma}}_{\gamma}(\tilde{s}) < V^{\tilde{\pi}_{\gamma}}_{\gamma^{*}}(\tilde{s}),\ and\\
&V^{\beta}_{\gamma}(\tilde{s}) < V^{\beta}_{\gamma^{*}}(\tilde{s}).
\end{align*}
Together, we see that
\begin{align*}
V^{\beta}_{\gamma}(\tilde{s}) < V^{\tilde{\pi}_{\gamma}}_{\gamma}(\tilde{s}) <V^{\tilde{\pi}_{\gamma}}_{\gamma^{*}}(\tilde{s}) \leq V^{\beta}_{\gamma^{*}}(\tilde{s}).
\end{align*}
It remains to show that $\tilde{\pi}(\tilde{s}) \neq \beta(\tilde{s})$ and $\underset{\gamma \to \gamma^{*}} {\mathrm{lim}} PG_{\beta_{\gamma}}(\tilde{s}) \to 0$. We may assume the former, since if, infact  $\tilde{\pi}(\tilde{s}) = \beta(\tilde{s})$, then
\begin{align*}
 V^{\beta}_{\gamma}(\tilde{s}) =\ &\mathbb{E}\{r(\tilde{s}, \beta(\tilde{s})) + \gamma V^{\beta}_{\gamma}(s')\}\\
<\ &\mathbb{E}\{r(\tilde{s}, \tilde{\pi}(\tilde{s})) + \gamma V^{\tilde{\pi}_{\gamma}}_{\gamma}(s') \} = V^{\tilde{\pi}_{\gamma}}_{\gamma}(\tilde{s})\iff\\
&\mathbb{E}\{r(\tilde{s}, \beta(\tilde{s})) + \gamma V^{\beta}_{\gamma}(s')\}\\
<\ &\mathbb{E}\{r(\tilde{s}, \beta(\tilde{s})) + \gamma V^{\tilde{\pi}_{\gamma}}_{\gamma}(s') \}\iff\\
&\mathbb{E}\{\gamma V^{\beta}_{\gamma}(s')\} < \mathbb{E}\{\gamma V^{\tilde{\pi}_{\gamma}}_{\gamma}(s') \}\iff \\
&\mathbb{E}\{V^{\beta}(s')\} < \mathbb{E}\{V^{\tilde{\pi}_{\gamma}}(s') \}
\end{align*}
Since the expectation is taken over MDP dynamics, and both policies selected the same action at $\tilde{s}$, then distribution over successor states are the same. If there are no successor states, $s'$, where $\pi^{*}(s') \neq \tilde{\pi}(s')$ then this inequality continues to the successors of the successor states. However, this process cannot continue indefinitely, since otherwise the two Markov chains induced by $\beta$ and $\tilde{\pi}$ beginning at $\tilde{s}$ are therefore coupled, and with the same dyamics and $\gamma$, must have the same value. Therefore the two policies must differ at atleast one state where the preceeding value function inequality is true. For this reason, WLOG, we assume this state is $\tilde{s}$.

Finally, to show $\underset{\gamma \to \gamma^{*}} {\mathrm{lim}} PG_{\beta_{\gamma}}(\tilde{s}) \to 0$.  This directly follows, as $\forall \gamma < \gamma^{*}$ we have 
\begin{align*}
&V^{\beta}_{\gamma}(\tilde{s}) < V^{\tilde{\pi}_{\gamma}}_{\gamma}(\tilde{s})  <V^{\tilde{\pi}_{\gamma}}_{\gamma^{*}}(\tilde{s}) \leq V^{\beta}_{\gamma^{*}}(\tilde{s})\\
\to &0 < V^{\tilde{\pi}_{\gamma}}_{\gamma}(\tilde{s}) - V^{\beta}_{\gamma}(\tilde{s})< V^{\beta}_{\gamma^{*}}(\tilde{s})-V^{\beta}_{\gamma}(\tilde{s})\\
\to &0 < PG_{\beta_{\gamma}}(\tilde{s}) < V^{\tilde{\pi}_{\gamma}}_{\gamma}(\tilde{s}) - V^{\beta}_{\gamma}(\tilde{s})< V^{\beta}_{\gamma^{*}}(\tilde{s})-V^{\beta}_{\gamma}(\tilde{s})\\
\to &0< \underset{\gamma \to \gamma^{*}} {\mathrm{lim}} PG_{\beta_{\gamma}}(\tilde{s}) < \underset{\gamma \to \gamma^{*}} {\mathrm{lim}}  V^{\beta}_{\gamma^{*}}(\tilde{s})-V^{\beta}_{\gamma}(\tilde{s})\\
\to &0<  \underset{\gamma \to \gamma^{*}} {\mathrm{lim}} PG_{\beta_{\gamma}}(\tilde{s}) < \underset{\gamma \to \gamma^{*}} {\mathrm{lim}} \mathbb{E}_{\beta}\big\{\sum_{t=1}^{\infty} \gamma^{*^{t-1}}r_{t} - \gamma^{t-1}r_{t}\big\}\\
\to &0<  \underset{\gamma \to \gamma^{*}} {\mathrm{lim}} PG_{\beta_{\gamma}}(\tilde{s}) < \underset{\gamma \to \gamma^{*}} {\mathrm{lim}} \mathbb{E}_{\beta}\big\{\sum_{t=1}^{\infty} (\gamma^{*^{t-1}} -\gamma^{t-1})r_{t}\big\}
\end{align*}
Since $\underset{\gamma \to \gamma^{*}} {\mathrm{lim}} \mathbb{E}_{\beta}\big\{\sum_{t=1}^{\infty} (\gamma^{*^{t-1}} -\gamma^{t-1})r_{t}\big\} \to 0$, it follows that $\underset{\gamma \to \gamma^{*}} {\mathrm{lim}} PG_{\beta_{\gamma}}(\tilde{s}) \to 0$.
\end{proof}

\end{document}